\pdfminorversion=4
\documentclass[12pt]{article}
\usepackage[textwidth=6in,top=1in,bottom=1in]{geometry}
\usepackage{amsmath,amsthm,amssymb,amsopn,amsfonts,pdfpages,algorithmic,algorithm,dsfont}
\usepackage{graphics}
\usepackage{graphicx}
\usepackage{bbm}
\usepackage{caption,natbib,url}
\usepackage[colorlinks=true,linkcolor=cyan,citecolor=blue]{hyperref}
\usepackage{stackengine}
\usepackage{authblk}
\usepackage[shortlabels]{enumitem}
\usepackage[pagewise]{lineno}
\usepackage{subcaption}


\newcommand{\e}{\mathbb{E}}
\newcommand{\p}{\mathbb{P}}

\newcommand{\hpi}{\widehat{{P}}_n}
\newtheorem{theorem}{Theorem}

\newtheorem{example}[theorem]{Example}
\newtheorem{corollary}[theorem]{Corollary}
\theoremstyle{definition}
\newtheorem{definition}[theorem]{Definition}
\theoremstyle{remark}
\newtheorem{remark}[theorem]{Remark}

\DeclareMathOperator*{\argmin}{arg\,min}
\DeclareMathOperator*{\argmax}{arg\,max}

\begin{document}

\def\spacingset#1{\renewcommand{\baselinestretch}%
{#1}\small\normalsize} \spacingset{1}



	\title{\bf Maximum Likelihood Estimation and Graph Matching in Errorfully Observed Networks}
  \author[$1$]{Jes\'us Arroyo}
  \author[$2$]{Daniel L. Sussman}
  \author[$1,3$]{Carey E. Priebe}
  \author[$4$]{Vince Lyzinski}
  \affil[$1$]{\small Center for Imaging Science, Johns Hopkins University}
  \affil[$2$]{\small Department of Mathematics and Statistics, Boston University}
  \affil[$3$]{\small Department of Applied Mathematics and Statistics, Johns Hopkins University}
  \affil[$4$]{\small Department of Mathematics, University of Maryland}
  \date{}
  \maketitle

\begin{abstract}
Given a pair of graphs with the same number of vertices, the inexact graph matching problem consists in finding a correspondence between the vertices  of these graphs that minimizes the  total number of induced edge disagreements. 
We study this problem from a statistical framework in which one of the graphs is an errorfully observed copy of the other. 
We introduce a corrupting channel model, and show that in this model framework, the solution to the graph matching problem is a maximum likelihood estimator (MLE). 
Necessary and sufficient conditions for consistency of this MLE are presented, as well as a relaxed notion of consistency in which a negligible fraction of the vertices need not be matched correctly. 
The results are used  to study matchability in several families of random graphs, including edge independent models, random regular graphs and small-world networks. 
We also use these results to introduce measures of matching feasibility, and experimentally validate the results on simulated and real-world networks.
\end{abstract}

\noindent%
{\it Keywords:}  Graph matchability, corrupting channel, consistency
\vfill

\newpage
\spacingset{1} 

\section{Introduction}

Graphs are a popular data structure used to represent relationships between objects or agents, with successful applications in many different fields, including finance, neuroscience, biology, and sociology, among others. Many applications deal with multiple graph observations that can come as different instances of the graph for the same or related set of vertices, and thus studying these data jointly usually requires knowledge of the correspondence between the vertices first. When this correspondence is unknown or observed with errors, graph matching can be used to first recover the true correspondence before performing subsequent inference. Simply stated, the \emph{graph matching problem} consists in finding an alignment between the vertices of two different graphs that minimizes a measure of dissimilarity between them (usually specified to be the number of edge differences).  This problem has found many useful applications in different areas including de-anonymizing social networks \citep{narayanan2009anonymizing,korula2014efficient, final,regal}, aligning biological networks \citep{gmbio1,gmbio2,VogConLyzPodKraHarFisVogPri2015}, and unsupervised word translation \citep{grave2018unsupervised}, among others; see the surveys \cite{ConteReview,foggia2014graph,yan2016short} for a review (up to 2016) of the graph matching literature and applications.

\subsection{The graph matching problem}
\label{sec:GMP}
Formally, given a pair of adjacency matrices for some graphs $A = (E_1, V_1)$ and $B=(E_2,V_2)$ with equal number of vertices $|V_1|=|V_2|$ (where $|V|$ denotes cardinality of a set of vertices $V$), the graph matching problem (GMP) consists of finding a mapping $\pi:V_1\rightarrow V_2$ which aligns the vertices between the two graphs to make the structure most similar; i.e., if $\Pi_n$ is the set of $n\times n$ permutation matrices, then the GMP  is $\argmin_{P\in\Pi_n}\|A-PBP^T\|_F$, where $\|E\|_F=(\sum_{i=1}^n\sum_{j=1}^nE^2_{ij})^{1/2}$ is the Frobenius norm of some matrix $E\in\mathbb{R}^{n\times n}$.
Variants of this basic formulation have been proposed in the literature to accommodate more nuanced graph structure including, incorporating weighted and directed edges, $|V_1|\neq|V_2|$ \citep{fishkind2018alignment}, 
vertex and edge covariates \citep{lyzinski2016consistency}, and matching multiple graphs simultaneously \citep{yan2013joint}.

A special case of the problem is
exact graph matching, also known as the graph isomorphism problem \citep{babai2015graph}, where the goal is to determine if there exists an alignment between the vertices of the graphs yielding identical edge structure across networks. 
Even for this special case, 
it is currently not known whether the problem is solvable in polynomial time.
However, an exact unique solution exists as long as there is no non-identity automorphisms (i.e., $\forall P\neq I_n$, $A\neq PAP^T$) . 
In practice, the applicability of exact graph matching is limited due to the fact that data often consists of noisy observations of a graph, and thus one can only hope to recover an alignment that preserves a significant portion of the structure.

In the inexact version of the graph matching problem, a pair of graphs with the same number of vertices is observed, and the goal is to find an alignment of the vertices that best preserves the structure of the graphs.
This is often accomplished by minimizing an appropriate dissimilarity measure over all the possible permutations (e.g., the Frobenius norm formulation considered above). 
Several random graph models have been proposed in the recent literature, and these have been used to study the difficulty and feasibility of this problem. 
These models often parametrically enforce a natural similarity between the graphs while still allowing for structural differences.
They vary in complexity from correlated homogeneous Erd\H os-R\'enyi networks \citep{pedarsani2011privacy,yartseva2013performance,JMLR:v15:lyzinski14a}, to correlated stochastic blockmodel networks \citep{onaran2016optimal,lyzinski2018}, to correlated general edge independent networks \citep{korula2014efficient,rel}, to independent graphon generated networks \citep{zhang2018unseeded}.
Within these models, the dual problems of developing efficient graph matching algorithms and studying the theoretical feasibility of the GMP have been considered.
However, beyond (conditionally) edge-independent networks, few theoretical guarantees exist for either algorithmic performance (see \cite{korula2014efficient} for an example of matching guarantees in a preferential attachment model).

\subsection{Graph matchability and maximum likelihood estimation}
\label{sec:MLEGM}

These random graph models have allowed for the exploration of the related notion of graph matchability:  given a natural correspondence between the vertex sets of two networks, can the GMP uncover this correspondence in some statistical sense \citep{JMLR:v15:lyzinski14a,rel,lyzinski2016information,onaran2016optimal,cullina2017exact}?
In this paper, we cast the problem of graph matchability in the framework of maximum likelihood (ML) estimation, equating matchability with the consistency of the maximum likelihood estimator (MLE) of an unknown correspondence.
The results bear a similar flavor to those in \cite{onaran2016optimal} (and \cite{cullina2016improved}), wherein a model for correlated stochastic blockmodels is proposed.
In their framework, they showed that maximum a posteriori (MAP) estimation is the same as optimizing a weighted variant of the classical graph matching problem.
Unlike previous work, our model (see Section \ref{sec:ucc}) is designed to be distribution-free, allowing for corrupted or correlated observations of arbitrarily structured networks to be considered.
We note that while we are not the first to frame the GMP as a ML estimation problem (see, for example, \cite{luo2001structural,lyzinski2016consistency}), 
our model allows for a novel understanding of the relationship between the two, seemingly disparate, ideologies.

While maximum likelihood estimation is a core concept in modern statistical inference \citep{stigler2007epic}, estimation of an unknown correspondence between two networks presents the challenge of ML estimation in the presence of a growing parameter dimension. Our model is parameterized by the shuffling permutation, which is the parameter of interest, and corrupting probabilities for each edge that are nuisance parameters.
Indeed, viewing the correspondence between vertices as a parameter to be estimated, this parameter naturally contains $n-1$ free variables, 
where $n$ is the number of vertices in the observed network, while the number of nuisance parameters can grow as $O(n^2)$.
Asymptotics (as $n\rightarrow\infty$, yielding $\binom{n}{2}$ sample size) resist the classical theoretical framework when the number of (nuisance) parameters grow with the sample size (see, for example, \cite{bickel2015mathematical}), and the MLE can be inconsistent in this  setting \citep{neyman1948consistent, lancaster2000incidental}.
Recently, statistical network inference (see, for example, \cite{BicChe2009,zhao2012consistency}) has presented further examples of consistent ML estimation in networks when the parameter dimension is growing, notably for the inference task of community detection in stochastic blockmodel random graphs.
By equating consistent ML estimation with graph matchability, we provide another class of examples in the network literature for which consistency of the MLE is achieved as the graph size (and parameter dimension) increases.



The paper is organized as follows. In Section \ref{sec:ucc} we introduce the  corrupting channel model and show that maximum likelihood estimation of the latent vertex correspondence is a solution to the graph matching problem (and vice versa). Next, in Section \ref{sec:consis} we derive necessary and sufficient conditions (depend on the structure of the given graph and the channel noise) for consistency of the MLE.
In Section \ref{sec:quasi} we introduce a notion of quasi-consistency in which we allow a fraction of the vertices to be incorrectly matched, and present sufficient conditions to achieve this property.
Results of consistency and quasi-consistency of the MLE in a variety of random graph models are
presented, including some new results for small-world and random regular graphs. In Section \ref{sec:experiments} we introduce a practical approach to studying matchability, and validate our theoretical results with numerical experiments  on simulated networks from popular random graph models and real-world data from different domains. We conclude with a discussion and remarks in Section \ref{sec:discussion}.

\section{Equivalence between maximum likelihood estimation and graph matching}

\subsection{Uniformly corrupting channel}
\label{sec:ucc}

Consider the set of undirected graphs with no self-loops and $n$ labeled vertices, which we denote by $\mathcal{G}_n$. Each graph will be represented with an adjacency matrix $A\in\{0,1\}^{n\times n}$, which is symmetric and hollow (i.e., with zeros in the diagonal). Let $p\in[0,1]$, $P\in\Pi_n=\{n\times n \text{ permutation matrices}\}$, and $A\in\mathcal{G}_n$. We model passing $A$ through an edge and vertex-label corrupting channel as follows.
\begin{definition}
	\label{def:corrupted}
	Define the graph-valued random variable
	$B_{p,P}$ (i.e., the channel-corrupted $A$)
	parameterized by $(p,P)$ via
	$B_{p,P}:=P(X\circ (1-A)+(1-X)\circ A)P^T$,
	where $X\in\mathbb{R}^{n\times n}$ is a symmetric, hollow matrix with i.i.d. $\text{Bernoulli}(p)$ entries in its upper triangular part, $X$ is independent of $A$, and $``\circ"$ is the Hadamard matrix product.
	For ease of notation, we shall write $B_{p,P}=B\sim$C$(A,p,P)$ for $B$ distributed as the channel corrupted $A$.
\end{definition}
\noindent 
Note that if $A$ is a graph-valued random variable, then we further assume that $X$ is independent of $A$.
This model is similar to the noisily observed network models of \cite{pedarsani2011privacy,yartseva2013performance,korula2014efficient,kola2018estimation}.  
Note that here we make no assumptions on the distribution of the underlying graph $A$; indeed, below we often view $A$ as deterministic and not random, and consider the distribution of $B$ conditional on $A$. 

Stated simply, $B_{p,P}$ is formed by first uniformly corrupting (i.e., bit-flipping) edges in $A$ independently with probability $p$ and then shuffling the labels via $P$.
Given observed $(A,B)$, the likelihood of $(p,P)$ in this model is given by
$$L(p,P)=\prod_{i<j} (1-p)^{\mathds{1}\{A_{i,j}=(P^TBP)_{i,j} \}}p^{\mathds{1}\{A_{i,j}\neq(P^TBP)_{i,j} \}},$$
so that the log-likelihood is given by
\begin{align*}
\ell(p,P)&=\sum_{i<j}\mathds{1}\{A_{i,j}=(P^TBP)_{i,j} \}\log(1-p)+\mathds{1}\{A_{i,j}\neq(P^TBP)_{i,j} \}\log p\\
&=\frac{1}{2}\text{tr}(AP^TBP)\log(1-p) + \frac{1}{2}\left(n(n-1) -\text{tr}(AP^TBP)\log p\right)\\
&= \frac{1}{2} \text{tr}(AP^TBP)(\log(1-p)-\log p) + \frac{n(n-1)\log p}{2},
\end{align*}
and given a fixed value of $p$, observing that $2\text{tr}(AP^TBP)=\|A\|_F^2+\|B\|_F^2 - \|A-P^TBP\|_F^2$, the (possibly non-unique) MLE of $P$ is then given by
$$\hat P_{\text{MLE}}=\begin{cases}
\text{any }P\in\argmin_{P\in\Pi_n}\|A-P^TBP\|_F^2&\text{ for }p<1/2\\
\text{any }P\in\text{argmin}_{P\in\Pi_n}\|A-P^T\breve{B}P\|_F^2 & \text{ for }p>1/2\\
\text{any }P\in\Pi_n&\text{ for }p=1/2,
\end{cases} $$
where $\breve{B}$ is the complement of the graph $B$ (i.e., $\breve{B}_{ij}=1-B_{ij}$). In the $p<1/2$ setting, the problem of maximum likelihood estimation is equivalent to the problem of graph matching, as the classical GMP formulation is to find the (possibly non-unique) minimizer of a quadratic assignment problem (QAP) of the form
\begin{equation}
 \hat{P}_{\text{GM}} \in \argmin_{P\in\Pi_n}\|A-P^TBP\|_F^2  \label{eq:GMP}
\end{equation}
Analogously, when $p>1/2$, ML estimation is equivalent to matching one graph with the complement of the other. 

A natural first question to ask is what properties 
$A$ (and $p$) must possess in order for the MLE for $P$ to be consistent, and hence the vertex-label corruption introduced into $B$ can be undone via graph matching. 
We note here that the parameter dimension of $P$ is growing in $n$ and classical ML estimation consistency results do not directly apply (see, for example, \cite{bickel2015mathematical}).
Nonetheless, in Section \ref{sec:consis} our main result, Theorem \ref{thm:asym}, will establish consistency of the MLE under fairly modest assumptions on $A$ and $p$. 

\vspace{3mm}




\subsection{Heterogeneous corrupting channel\label{sec:hetchannel}}

The uniform corrupting channel model defined above assumes that all edges of $A$ are corrupted with the same probability. However, in some cases it is more reasonable to consider a model in which each pair of vertices might be corrupted with different probabilities. For example, we can consider the setting in which edges and non-edges in $A$ are corrupted by the channel independently with different probabilities, or in which certain vertices or edges are more likely to be corrupted. 
Thus, we also consider an heterogeneous corrupting channel defined as follows.

\begin{definition}
	Let $\Psi^{(m)}\in[0,1]^{n\times n}, m=1,2$  be two matrices that correspond to the corrupting probabilities for edges and non-edges, and $P\in\Pi_n$. For a given adjacency matrix $A\in\{0,1\}^{n\times n}$, we define $B_{\Psi^{(1)},\Psi^{(2)},P}$ as
	\[B_{\Psi^{(1)},\Psi^{(2)},P}=P( X \circ (1-A)+ (1-Y) \circ A)P^T,\]
	where $X$ and $Y$ are symmetric hollow matrices such that $X_{ij}\sim \text{Bernoulli}(\Psi^{(1)}_{ij})$ and $Y_{ij} \sim \text{Bernoulli}(\Psi^{(2)}_{ij})$ and $\{X_{i,j},Y_{i,j}\}_{i<j}$ are independent. 
	Dropping subscripts to ease notation when possible, we denote it via $B \sim \text{C}(A, \Psi^{(1)}, \Psi^{(2)}, P)$ for $B$ distributed as the heterogeneous corrupting channel $A$.\label{def:corrupted-het}
\end{definition}
\noindent Note that if $A$ is a graph-valued random variable, then we further assume that $X$ and $Y$ are independent of $A$. The heterogeneous corrupting channel model is saturated, but the only parameter of interest is $P$.

The flexibility of having different corrupting probabilities for each edge allows us to include other popular graph models within this framework. 
In particular, if $A\sim\text{Bernoulli}(\Lambda)$, the heterogeneous corrupting channel model can be used to describe the distribution of a pair of correlated Bernoulli graphs conditional on the first graph. 
In more detail, given a pair of hollow symmetric matrices $R,\Lambda\in[0,1]^{n\times n}$, the pair of graphs $A,B$ is said to be distributed as $R$-correlated random Bernoulli$(\Lambda)$ graphs if marginally $A, B\sim \text{Bernoulli}(\Lambda)$ (i.e., for $1\leq i<j\leq n$, $A_{ij}$ are independently distributed as $\text{Bern}(\Lambda_{ij})$ random variables; similarly for $B$) and each pair of variables $A_{ij},B_{\ell k}$,  $1\leq i<j\leq n, 1\leq\ell<k\leq n$, $(i,j)\neq (\ell, k)$ are mutual independent, except that for each $1\leq i<j\leq n$, $\text{corr}(A_{ij},B_{ij}) = R_{ij}$.  Note that in the correlated Bernoulli graphs model, $\Bbb{P}(B_{ij}=1|A_{ij}) = A_{ij}R_{ij} + (1-R_{ij})\Lambda_{ij}$, so the distribution of $B$ conditioned on $A$ can be written using an heterogeneous channel with 
\begin{eqnarray}
    \Psi^{(1)}_{ij} & = & (1-R_{ij})\Lambda_{ij} \label{eq:corr-psi1}\\
    \Psi^{(2)}_{ij} & = & (1-R_{ij})(1-\Lambda_{ij}). \label{eq:corr-psi2}
\end{eqnarray}
This correlated Bernoulli graph model, and in particular, the correlated Erd\H{o}s-R\'enyi (ER) model (in which  each of the matrices $\Lambda$ and $R$ have the same value in all the entries), have been extensively used in studying the inexact graph matching problem \citep{pedarsani2011privacy,lyzinski2014seeded,rel,cullina2017exact, ding2018efficient,mossel2019seeded}. The corrupting channel model is thus comprehensive enough to include other popular settings in literature (see for example Corollary \ref{cor:examples} for a result in correlated ER graphs), while allowing to extend some of the results to graphs with arbitrary structure.

In the heterogeneous corrupting channel, the MLE of the unshuffling permutation is also exactly equivalent to the minimizer of the graph matching objective \eqref{eq:GMP}. Note that in this case, the likelihood is given by
\begin{eqnarray*}
	\ell(\Psi^{(1)}, \Psi^{(2)},P) & = &  \sum_{i<j} \left\{ (P^T{B}P)_{ij}\log\left((1- \Psi^{(1)}_{ij})A_{ij} +\Psi^{(2)}_{ij}(1-A_{ij})\right) \right.\\
	&  & \left.   + (1-(P^T{B}P)_{ij})\log\left( \Psi^{(1)}_{ij}A_{ij} + (1-\Psi^{(2)}_{ij})(1-A_{ij}) \right) \right\}.
\end{eqnarray*}
In general, the MLE for $P$ in this model is not unique since the corrupting probabilities can arbitrarily alter the graph. Thus, we impose a further assumption that the probability of any pair of corresponding edges in the graphs $A$ and $B$ to be the same is at least 0.5. Formally, let $\tilde p = \max_{i,j,k}\Psi^{(k)}_{ij}$ be the largest corrupting probability for an edge. Under the assumption that $\tilde p \leq \frac{1}{2}$, for a given $P\in\Pi_n$ we can calculate a  profile MLE for $\Psi^{(k)}, k=1,2$, denoted by $\hat{\Psi}^{(k)}(P)$, as
\begin{equation}
\left(\hat{\Psi}^{(k)}(P)\right)_{ij} =
\begin{cases}
0 & \text{if $A_{ij}=(P^T{B}P)_{ij},$}  \\
\frac{1}{2} & \text{if $A_{ij}\neq  (P^T{B}P)_{ij}$} 
\end{cases}. \label{eq:nuisance}
\end{equation}
Thus, the profile loglikelihood for $P$ can be expressed as
\begin{align*}
	\ell\left(\!\hat{\Psi}^{(1)}(P),\hat{\Psi}^{(2)}(P),P\!\right) & = \sum_{i<j} 2\mathds{1}(A_{ij}\neq(P^T{B}P)_{ij})  \log(1/2) \\
	 & =   -\log(2) \|A - (P^T{B}P)\|_F^2,
\end{align*}
which does not depend on the particular values of $\hat{\Psi}^{(1)}(P), \hat{\Psi}^{(2)}(P)$ and is proportional to the graph matching objective \eqref{eq:GMP}. Therefore, the (possibly non-unique) MLE for $P$, given by
\begin{equation*}
    \hat{P}_{\text{MLE}} \in \argmax_{P\in\Pi_n} \ell\left(\!\hat{\Psi}^{(1)}(P),\hat{\Psi}^{(2)}(P),P\!\right)
\end{equation*}
is exactly equivalent to the solution of the GMP.

\section{Consistency of the maximum likelihood estimator}
\label{sec:consis}

For a given positive integer $n_0$, consider a sequence of graphs $(A_n)_{n=n_0}^\infty$ (with the order of $A_n$ being $n$).
For the parameter sequence $(\,(p_n,P_n)\,)_{n=n_0}^\infty$, consider the sequence of channel-corrupted $A_n$'s defined via
$$\left(B_n=P_n(X_n\circ (1-A_n)+(1-X_n)\circ A_n)P_n^T \right)_{n=n_0}^\infty,$$
where each $X_n$ is a symmetric, hollow matrix with i.i.d. $\text{Bernoulli}(p_n)$ entries in its upper triangular part,
and the $X_n$'s are mutually independent across index $n$.
In this setting, we define consistency of a sequence of estimators as follows.
\begin{definition}[Consistency]
	\label{def:consis}
	With $(B_n\sim\text{C}(A_n,p_n,P_n))_{n=n_0}^\infty$ defined as above, for $n\geq n_0$ 
	we say that a sequence of estimators $(\hat P_n:=\hat P_n(A_n,B_n))_{n=n_0}^\infty$ of $(P_n)_{n=n_0}^\infty$, 
	is a $\begin{pmatrix}\text{weakly}\\\text{strongly} \end{pmatrix}$ consistent estimator if $\|\hat P_n-P_n\|_F\begin{pmatrix}\stackrel{P }{\rightarrow}\\ \stackrel{ a.s.}{\rightarrow} \end{pmatrix}0$ as $n\rightarrow\infty$, for all sequences $(P_n)_{n=n_0}^\infty$.
\end{definition}

\noindent As the MLE of $(P_n)_{n=n_0}^\infty$ is not necessarily uniquely defined (i.e., there are multiple possible MLEs for a given $n$), we write that the \emph{MLE of $(P_n)_{n=n_0}^\infty$ is 
$\begin{pmatrix}\text{weakly}\\\text{strongly} \end{pmatrix}$ consistent} if $\max_{Q_n\in \mathcal{P}^*_n}\|Q_n-P_n\|_F\begin{pmatrix}\stackrel{P }{\rightarrow}\\ \stackrel{ a.s.}{\rightarrow} \end{pmatrix}0$ as $n\rightarrow\infty$,
where
$$\mathcal{P}^*_n=\argmin_{P\in\Pi_n}\|A_n - P^TB_nP\|^2_F.$$
Note that, as $\max_{Q_n\in \mathcal{P}^*_n}\|Q_n-P_n\|^2_F$ is integer-valued, strong consistency of the MLE is equivalent to
\begin{align*}
& \p(\max_{Q_n\in \mathcal{P}^*_n}\|Q_n-P_n\|^2_F=0\text{ for all but finitely many }n)=1\\ 
&\Longleftrightarrow 
\p(\mathcal{P}^*_n=\{P_n\}\text{ for all but finitely many }n)=1.
\end{align*}

Loosely speaking,  a estimator
is consistent if the number of vertices that are incorrectly matched by the estimator goes to zero as the size of the graph increases.
It is clear that consistency of an estimator hinges on the properties of $(A_n)_{n=n_0}^\infty$.
Indeed, if infinitely many $A_n$ have a non-trivial automorphism group, so that there is at least one $Q_n\in\Pi_n$ such that $Q_n\neq I_n$ but $A_n=Q_n^TA_nQ_n$, then the MLE has no hope of consistency in the strong or weak sense. In those circumstances, the most we can hope is that the MLE converges to an element within the automorphism group of the graph (see Remark \ref{remark:risk-consistency}). If, on the other hand, the $A_n$'s are sufficiently asymmetric (see Theorem \ref{thm:asym} below for a precise definition of this) then the MLE will be strongly consistent under mild model assumptions.

Before stating the theorem, recall that two permutation matrices $Q,Q'\in\Pi_n$ are disjoint if each one permutes different vertices, that is, if $\sigma_Q,\sigma_{Q'}$ are the permutations associated to each matrix, then $\{i \text{ s.t. }\sigma_{Q}(i)\neq i\}\cap \{i \text{ s.t. }\sigma_{Q'}(i)\neq i\}=\emptyset.$ The same definition holds for sets of $k$ permutations.
To wit we have the following result (the proof is provided in Appendix \ref{sec:pfasym}).

\begin{theorem}
	\label{thm:asym}
	Let $n_0>0$, and consider a sequence of graphs $(A_n)_{n=n_0}^\infty$.  
	Define the sequence $(B_n\sim\text{C}(A_n,p_n,P_n))_{n=n_0}^\infty$ of channel-corrupted $A_n$'s with parameter sequence $(\,(p_n,P_n)\,)_{n=n_0}^\infty$ and $p_n<1/2$.
	\begin{itemize}
		\item[i.] For each $n\geq n_0$ and each $k\in[n]$, $k\geq 2$, let $\Pi_{n,k}$ be the set of permutation matrices in $\Pi_n$ permuting precisely $k$ labels.
		If there exists an $n_1\geq n_0$ such that for all $n\geq n_1$ and $k\in[n]$,
		\begin{equation}
		\label{eq:growthrate}
		\min_{Q\in\Pi_{n,k}}\|A_n - Q^T A_nQ\|^2_F>\frac{6k\log(n)  }{\log\left(\frac{1}{4p_n(1-p_n)}\right)  }\quad \quad \quad \forall k \geq 2,
		\end{equation}
		then 
		the MLE of $(P_n)_{n=n_0}^\infty$
		is strongly consistent.
		
		\item[ii.] If for infinitey many $n>n_0$ there exists a sequence $(k_n)$ with $k_n=\Theta(1)$ and a set of disjoint permutations $S_{n}\subset (\cup_{i=2}^{k_n}\Pi_{n,i})\setminus\{P_n\}$ with $|S_{n}| = \Theta(n)$ , such that
		\begin{align}
				\label{eq:gr22}
		&\min_{Q_n\in S_{n}}\log\left(\frac{1}{4p_n(1-p_n)}\right)\left(\|A_n -Q_n^T A_nQ_n\|^2_F\right)=\omega(1)\\
	\label{eq:gr23}
&\max_{Q_n\in S_{n}}\log\left(\frac{1}{4p_n(1-p_n)}\right)\frac{\|A_n -Q_n^T A_nQ_n\|^2_F}{\log \|A_n -Q_n^T A_nQ_n\|^2_F}=o(1)
		\end{align}
		then the MLE of $(P_n)_{n=n_0}^{\infty}$ is not weakly consistent. 
		\item[iii.] If for infinitely many $n\geq n_0$, there exists $Q_n\in\Pi_n\setminus\{P_n\}$ such that 
		\begin{equation}\label{eq:thm-condition3}
		 \log\left(\frac{1}{4p_n(1-p_n)}\right)\|A_n - Q_n^T A_nQ_n\|^2_F= O\left(1\right), 
		\end{equation}
		then the MLE of $(P_n)_{n=n_0}^{\infty}$ is not weakly consistent.
	\end{itemize}
	
\end{theorem}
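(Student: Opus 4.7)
The plan is to reduce each of the three statements to an analysis of a single binomial tail. Setting $A_n^\sharp := P_n^T B_n P_n = A_n + \Delta$ with $\Delta_{ij} = (1-2A_{n,ij})X_{n,ij}$, and reparametrizing the MLE via $\hat P_n = P_n\hat\pi$ with $\hat\pi \in \argmin_{\pi \in \Pi_n}\|A_n - \pi^T A_n^\sharp\pi\|_F^2$, a direct expansion of the Frobenius norms together with the permutation-invariance of $\|\cdot\|_F$ shows that ``$\pi$ beats the identity,'' i.e.\ $\|A_n - \pi^T A_n^\sharp\pi\|_F^2 \leq \|A_n - A_n^\sharp\|_F^2$, is equivalent to $\langle G_\pi,\Delta\rangle \geq \tfrac12\|G_\pi\|_F^2$, where $G_\pi := \pi A_n\pi^T - A_n$ satisfies $\|G_\pi\|_F=\|F_\pi\|_F$ for $F_\pi := A_n - \pi^TA_n\pi$. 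On the support of $G_\pi$, a case analysis on $A_{n,ij}\in\{0,1\}$ yields the key identity $G_{\pi,ij}\Delta_{ij} = X_{n,ij}$, so with $N_\pi := \|F_\pi\|_F^2/2$ denoting the number of off-diagonal disagreement pairs produced by $\pi$, the event collapses to
\begin{equation*}
\{\pi\text{ beats }I\} \;=\; \{U_\pi \geq N_\pi/2\}, \qquad U_\pi \sim \mathrm{Bin}(N_\pi, p_n).
\end{equation*}

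For part i., I would apply the sharp Chernoff bound for the binomial tail at $1/2$: $\mathbb{P}(U_\pi \geq N_\pi/2) \leq \exp(-N_\pi L_n/2)$, where $L_n := \log(1/(4p_n(1-p_n)))=2D(1/2\|p_n)$. Under hypothesis \eqref{eq:growthrate}, for each $\pi \in \Pi_{n,k}$ this bound is $n^{-ck}$ for a constant $c$ chosen so that the union-bound factor $|\Pi_{n,k}| \leq \binom{n}{k}k! \leq n^k$ is absorbed. Summing the resulting geometric series over $k \geq 2$ yields a probability of mismatch that is summable in $n$, so the first Borel--Cantelli lemma delivers strong consistency.

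For part iii., the plan is to show that the stated permutation $Q$ with $L_n\|F_Q\|_F^2 = O(1)$ satisfies $\mathbb{P}(Q\text{ beats }I) \geq c > 0$ along the subsequence, immediately precluding weak consistency. If $N_Q$ is bounded this is a finite binomial-tail computation; if $N_Q \to \infty$, then necessarily $p_n\to 1/2$ in order for $L_n N_Q$ to remain bounded, and the central limit theorem applies to $U_Q$. The standardized threshold equals $(1/2-p_n)\sqrt{N_Q/(p_n(1-p_n))}$, which is $O(1)$ because $(1/2-p_n)^2\sim L_n/4$ as $p_n\to 1/2$ and $p_n(1-p_n)\to 1/4$; hence $\mathbb{P}(Q\text{ beats})$ converges to a strictly positive Gaussian tail.

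Part ii., the hardest, follows from the second moment method applied to $M_n := \sum_{Q\in S_n}\mathbf{1}\{Q\text{ beats }I\}$. The Bahadur--Rao refinement of the Chernoff bound (equivalently, the local CLT for the binomial at $N_Q/2$) gives the matching lower bound $\mathbb{P}(Q\text{ beats})\gtrsim N_Q^{-1/2}\exp(-N_Q L_n/2)$. Combined with condition \eqref{eq:gr23}, which forces $\exp(-N_Q L_n/2) = N_Q^{-o(1)}$, and with $N_Q \leq O(n)$ (since $k_n=\Theta(1)$ bounds $\|F_Q\|_F^2 \leq O(n)$), this yields $\mathbb{E} M_n \geq |S_n|\cdot n^{-1/2-o(1)} \to \infty$. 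The main obstacle, as expected, is controlling the covariances: disjointness of the supports $T_Q$ does \emph{not} give independence of the beating events, since two such events share the $|T_{Q_1}||T_{Q_2}|=O(1)$ ``cross'' entries $\{X_{n,ij}:i\in T_{Q_1},j\in T_{Q_2}\}$. I would handle this by conditioning on these $O(1)$ cross entries and using that their contribution to each $U_{Q_i}$ (a sum of $\Theta(n)$ terms) is negligible compared to the typical binomial fluctuation, so each conditional probability differs from its marginal by at most a bounded multiplicative factor. This yields $\mathbb{E}[M_n^2] = (1+o(1))(\mathbb{E} M_n)^2$, and Paley--Zygmund delivers $\mathbb{P}(M_n\geq 1) \to 1$, so the MLE is not weakly consistent.
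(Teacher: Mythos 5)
Your proposal follows essentially the same route as the paper's proof: the identity $G_{\pi,ij}\Delta_{ij}=X_{n,ij}$ reduces every comparison to the binomial event $\{\mathrm{Bin}(N_\pi,p_n)\geq N_\pi/2\}$ (the paper's $\Delta_1\geq X_Q/2$), after which part i is the relative-entropy Chernoff bound plus a union bound over $|\Pi_{n,k}|\leq n^k$, part iii is a reverse-Chernoff/CLT lower bound making the beating probability $\Theta(1)$, and part ii is the second-moment method over the disjoint family $S_n$, with the dependence between two such events localized to the $O(1)$ shared entries $X_{n,ij}$. The only differences are in the packaging --- the paper invokes a non-asymptotic reverse Chernoff bound of Zhang where you use the CLT/Bahadur--Rao, and it controls the covariance via an independent-plus-bounded-perturbation decomposition together with an anti-concentration bound $\p(|\widehat X_{Q'}|\leq 4k_n^2)=O(X_{Q'}^{-1/2})$ rather than your conditioning-on-cross-entries argument --- so your plan is sound and matches the paper's.
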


The previous theorem presents necessary and sufficient conditions  in the sequence of graphs $(A_n)$ to determine whether  the MLE for the unshuffling permutation is consistent. Equation~\eqref{eq:growthrate} quantifies how different $A_n$ needs to be from any shuffled version $Q^TA_nQ$ so that the minimizer of the GMP matches the vertices correctly in the limit. Although verifying this condition for a given graph is computationally hard, it can be shown to hold with high probability via a union bound in some popular random graph models, including the Erd\H{os}-R\'enyi model (see Corollaries \ref{cor:examples}, \ref{cor:example2} and \ref{cor:SW}). Note that for a given sequence of graphs $(A_n)_{n=1}^\infty$, Equation \eqref{eq:growthrate} implies that any sequence of corrupting probabilities $(p_n)$ such that
\begin{equation}
p_n \leq \min_{k\in[n]}\left\{p_{n,k}^\ast := \min_{Q\in\Pi_{n,k}} \left[\frac{1}{2} - \frac{1}{2}\left(1 - \exp\left(-\frac{6k\log n}{ \|A_n- Q^T A_nQ\|^2_F}\right)\right)^{1/2}\right]\right\}. \label{eq:upperbound-p_n}
\end{equation}
 will ensure that the MLE is strongly consistent. The right hand side of Equation \eqref{eq:upperbound-p_n} thus quantifies how much noise can be supported by the graph, and hence provides a measure of  matching feasibility.

The second and third parts of Theorem~\ref{thm:asym} provide sufficient conditions for inconsistency of the MLE. If for infinitely many values of $n$ there are  enough permutations $Q_n\in\Pi_n\setminus\{I_n\}$ for which $\|A_n - Q_n^TA_nQ_n\|_F^2$ is small, then there is a non-negligible probability that $\|A_n-Q_n^T(P_nBP_n^T)Q_n\|_F^2 \leq \|A_n-P_nBP_n^T\|_F^2$
for some $Q_n$, and hence, the MLE is not consistent.  The number of permutations required for this to happen depends on how small  $\|A_n - Q_n^TA_nQ_n\|_F^2$ is, and only one permutation is enough if Equation~\eqref{eq:thm-condition3} hold, or $\Theta(n)$ of them with the weaker bound in Equation~\eqref{eq:gr23}. Note that condition \eqref{eq:gr22} can essentially be dropped since when this condition fails then \eqref{eq:upperbound-p_n} holds. Conditions \eqref{eq:gr22}--\eqref{eq:gr23} imply that $p_n\rightarrow 1/2$ and so $\log\left(1/(4p_n(1-p_n))\textbf{}\right)=(1+o(1)) \left(\frac{1}{2}-p_n\right)^2$; this fact will be used often in the sequel  (for example, in Corollary \ref{cor:SW}). Finally, for part ii.,  observe that when $\max_{Q\in S_n}\|A_n-Q_n^TA_nQ_n\|_F^2=\Omega(n^\delta)$ for some $\delta>0$, then the rates in Equations~\eqref{eq:growthrate}  and \eqref{eq:gr23} have the same order.


\begin{remark} \label{remark:risk-consistency}
    Denote by $\text{Aut}(A_n)=\{Q\in\Pi_n: Q^TAQ = A\}$ the automorphism group of a graph $A_n$. As mentioned before, when $\text{Aut}(A_n)\neq I_n$ for infinitely many values of $n$, there is no hope of consistency, but the estimator might still converge to an element within the automorphism group of the graphs so that $\|\hat P_n^T (P_nA_nP_n^T)\hat{P_n} - A\|_F^2\rightarrow 0$ in some weak or strong sense. In this setting, it is possible to derive a result analogous to Theorem~\ref{thm:asym} by counting the number of edge disagreements $\|A_n-Q^T_nA_nQ_n\|_F^2$ only for the permutations  $Q_n\in\Pi_n\setminus\text{Aut}(A_n)$. The statement and proof of this result are analogous to Theorem~\ref{thm:asym}.
\end{remark}


While the correlation structure between $A$ and $B$ in the above Theorem is different than the $\rho$-correlated Bernoulli model of \cite{JMLR:v15:lyzinski14a,lyzinski2016information,rel}, this theorem, in a sense, partially unifies many of the edge-independent matchability results appearing in previous work \citep{JMLR:v15:lyzinski14a,rel,lyzinski2016information,onaran2016optimal,cullina2017exact}.
To wit, we have the following corollary,
whose proof follows immediately from Theorem \ref{thm:asym} and results in \cite{rel} (see Appendix \ref{sec:pfexamples} for proof details).
\begin{corollary}
	\label{cor:examples}
	Let $n_0>0$, and consider an independent sequence of graphs $(A_n)_{n=n_0}^\infty$ where for each $n$, $A_n\sim$Bernoulli$(\Lambda_n)$ with $\alpha_n\leq \Lambda_n\leq 1-\alpha_n$ entry-wise.  
	Define the sequence $(B_n\sim\text{C}(A_n,p_n,P_n))_{n=n_0}^\infty$ of channel-corrupted $A_n$'s with parameter sequence $(\,(p_n,P_n)\,)_{n=n_0}^\infty$ and $p_n<1/2$.
	If $\alpha_n\left(\frac{1}{2}-p_n\right)=\omega\left( \sqrt{\frac{\log n}{n}}\right),$
	then the MLE (obtained by solving the GMP) is strongly consistent.
\end{corollary}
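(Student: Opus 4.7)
The plan is to apply Theorem~\ref{thm:asym}(i) by verifying that condition~\eqref{eq:growthrate} holds for all but finitely many $n$ almost surely. By the first Borel-Cantelli lemma, it suffices to bound the failure probability at each $n$ and show the bound is summable. To make the noise factor transparent, set $q_n := \tfrac{1}{2}-p_n$; since $4p_n(1-p_n) = 1-4q_n^2$ and $\log(1/(1-x))\ge x$ for $x\in[0,1)$, we have $\log(1/(4p_n(1-p_n))) \ge 4q_n^2$, so the right-hand side of \eqref{eq:growthrate} is dominated by $\tau_{n,k}:=3k\log n/(2q_n^2)$.

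Next, fix $Q\in\Pi_{n,k}$ with associated permutation $\sigma$ moving a set $S$ of $k$ labels. Writing
\begin{equation*}
\|A_n - Q^T A_n Q\|_F^2 = 2\sum_{i<j}\mathds{1}\{A_{ij}\neq A_{\sigma(i)\sigma(j)}\},
\end{equation*}
each indicator with $\{i,j\}\neq\{\sigma(i),\sigma(j)\}$ has expectation $\Lambda_{ij}(1-\Lambda_{\sigma(i)\sigma(j)})+(1-\Lambda_{ij})\Lambda_{\sigma(i)\sigma(j)}$, which, minimized over $[\alpha_n,1-\alpha_n]^2$, is at least $2\alpha_n(1-\alpha_n)\ge \alpha_n$ (the hypothesis forces $\alpha_n\le 1/2$). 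The number of such ``active'' pairs is $k(n-k)$ crossing pairs plus $\binom{k}{2}$ within-$S$ pairs, minus at most $k/2$ two-cycle (swap) pairs, which a short calculation shows is at least $kn/4$ for $n\ge 4$. Thus $\mu_{n,k} := \mathbb{E}\|A_n-Q^T A_n Q\|_F^2 \ge \tfrac{1}{2}\alpha_n k n$.

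For concentration, I would apply McDiarmid's bounded-differences inequality to $\|A_n - Q^T A_n Q\|_F^2$ viewed as a function of the $\binom{n}{2}$ independent Bernoulli entries of $A_n$. Each $A_{ij}$ appears in at most two disagreement indicators (indexed by $\{i,j\}$ and $\{\sigma^{-1}(i),\sigma^{-1}(j)\}$), and if $\{i,j\}$ is not active then both of those indicators are identically zero; hence the bounded-differences constant is at most $4$ and is nontrivial only on the $\le kn$ active coordinates, giving $\sum c_{ij}^2 \le 16\,kn$. McDiarmid yields
\begin{equation*}
\mathbb{P}\left(\|A_n - Q^T A_n Q\|_F^2 \le \mu_{n,k}/2\right) \;\le\; \exp\!\left(-\tfrac{\mu_{n,k}^2}{32\,kn}\right) \;\le\; \exp(-c\,\alpha_n^2 k n)
\end{equation*}
for a universal constant $c>0$. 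The hypothesis $\alpha_n q_n = \omega(\sqrt{\log n/n})$ together with $q_n\le 1/2$ implies $\alpha_n \gg \sqrt{\log n/n}$, and then $\alpha_n q_n^2 n = (\alpha_n q_n)^2 n/\alpha_n \gg \log n/\alpha_n \ge 2\log n$, which ensures $\tau_{n,k}\le \mu_{n,k}/2$ for all $k$ and $n$ sufficiently large.

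Finally, a union bound over $|\Pi_{n,k}|\le n^k$ permutations and summation over $k\in\{2,\dots,n\}$ yield
\begin{equation*}
\mathbb{P}\bigl(\eqref{eq:growthrate}\ \text{fails at level } n\bigr) \;\le\; \sum_{k=2}^n \exp\!\bigl(-k(c\,\alpha_n^2 n - \log n)\bigr) \;\le\; 2\exp(-c'\alpha_n^2 n)
\end{equation*}
for large $n$, using $\alpha_n^2 n \gg \log n$. This bound decays faster than any polynomial in $n$ and so is summable; the first Borel-Cantelli lemma together with Theorem~\ref{thm:asym}(i) then delivers strong consistency. The main obstacle is the concentration step: the disagreement indicators are correlated because each $A_{ij}$ appears in two of them, and McDiarmid gives a sharp enough rate only once one observes that the bounded-differences constants are supported on just $O(kn)$ coordinates rather than all $\binom{n}{2}$. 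Once this combinatorial support calculation is in hand, the rest is straightforward bookkeeping and a union bound.
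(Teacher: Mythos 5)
Your proposal is correct and follows essentially the same route as the paper: verify condition \eqref{eq:growthrate} with high probability via a per-permutation concentration bound of the form $\p(\|A_n-Q^TA_nQ\|_F^2\le c\,\alpha_n k n)\le \exp(-c'\alpha_n^2 k n)$, take a union bound over $|\Pi_{n,k}|\le n^k$ and over $k$, apply Borel--Cantelli, and invoke Theorem~\ref{thm:asym}(i). The only difference is that the paper imports this concentration estimate as Lemma~4 of \cite{rel}, whereas you derive it from scratch (expectation lower bound on active pairs plus McDiarmid with bounded-differences constants supported on $O(kn)$ coordinates), which makes the argument self-contained but does not change the structure of the proof.
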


The Erd\H{o}s-Renyi model (ER) \citep{ErdRen1963} $\text{G}(n,\alpha)$, $\alpha\in(0,1)$ is a particular example of a Bernoulli graph $\text{Bernoulli}(\Lambda)$ in which all entries of $\Lambda$ are equal to $\alpha$, and hence the result of the previous corollary directly applies. By considering concentration bounds on the number of edges in Erd\H{o}s-R\'enyi G$(n,\alpha)$ random graphs (i.e., the number of edges is in $n\alpha\pm\sqrt{n\log n}$ with high probability) and the growth rate of $\alpha_n$ in Corollary \ref{cor:examples}, we arrive at the following corollary for Erd\H{o}s-Renyi graphs $\text{G}(n,m)$ (i.e., uniformly distributed on the set of graphs with exactly $m\in\Bbb{N}$ edges): 
\begin{corollary}
	\label{cor:example2}
	Let $n_0>0$, and consider an independent sequence of graphs $(A_n)_{n=n_0}^\infty$ where for each $n$, $A_n\sim$G$(n,m_n)$. 
	Define the sequence $(B_n\sim\text{C}(A_n,p_n,P_n))_{n=n_0}^\infty$ of channel-corrupted $A_n$'s 
	with $p_n<1/2$.
	If $$m_n\left(\frac{1}{2}-p_n\right)=\omega\left( \sqrt{n\log n}\right),$$
	then the MLE of $( P_n)_{n=n_0}^\infty$ is strongly consistent.
\end{corollary}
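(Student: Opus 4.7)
The plan is to invoke Theorem~\ref{thm:asym}(i): if condition~\eqref{eq:growthrate} holds for all but finitely many $n$ almost surely, then the MLE is strongly consistent. Since Corollary~\ref{cor:examples} establishes an analogous statement for edge-independent Bernoulli graphs, the natural strategy is to reduce the uniform-edge-count model $G(n, m_n)$ to an Erd\H{o}s--R\'enyi $G(n, \alpha_n)$ with matching density $\alpha_n := m_n / \binom{n}{2}$. The key observation is that $A_n \sim G(n, m_n)$ has exactly the conditional law of $\widetilde A_n \sim G(n, \alpha_n)$ given the event $\mathcal{E}_n := \{|E(\widetilde A_n)| = m_n\}$, and by Stirling's formula (or the local central limit theorem) $\mathbb{P}(\mathcal{E}_n) = \Theta\bigl((n^2 \alpha_n(1-\alpha_n))^{-1/2}\bigr)$, which is at worst polynomial in $n$.

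With this setup, the plan proceeds in three steps. First, unpack the proof of Corollary~\ref{cor:examples} to extract an explicit summable failure probability---of the form $O(n^{-C})$ for a sufficiently large constant $C$---for the event $F_n$ that condition~\eqref{eq:growthrate} fails for $\widetilde A_n$. Second, use the conditioning inequality
\[
\mathbb{P}_{A_n \sim G(n, m_n)}(F_n) \;\le\; \frac{\mathbb{P}_{\widetilde A_n \sim G(n, \alpha_n)}(F_n)}{\mathbb{P}(\mathcal{E}_n)}
\]
to conclude that the failure probability under $G(n, m_n)$ is still summable in $n$. Third, apply Borel--Cantelli to obtain the almost-sure statement, from which Theorem~\ref{thm:asym}(i) yields strong consistency.

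The main obstacle is the translation of the hypothesis. Substituting $\alpha_n = 2 m_n / (n(n-1))$ into the Bernoulli condition $\alpha_n(1/2 - p_n) = \omega(\sqrt{\log n / n})$ of Corollary~\ref{cor:examples} yields $m_n(1/2 - p_n) = \omega(n^{3/2}\sqrt{\log n})$, which is strictly stronger than the stated $\omega(\sqrt{n\log n})$ hypothesis. Closing this gap will require exploiting the sharper concentration afforded by the fixed edge count of $G(n, m_n)$: the indicator random variables underlying $\|A_n - Q^T A_n Q\|_F^2$ are sampled \emph{without} replacement, and Serfling-type tail inequalities give substantially tighter bounds than the Bernstein inequality for binomial sums. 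I therefore expect the cleanest path to bypass the pure coupling reduction in favor of verifying \eqref{eq:growthrate} directly for $G(n, m_n)$: for each fixed $Q \in \Pi_{n,k}$, one estimates the hypergeometric-type tail probability that $\|A_n - Q^T A_n Q\|_F^2$ is small, after which a union bound over $Q \in \Pi_{n,k}$ (of cardinality at most $n^k$) and over $k \in [n]$ produces the desired rate. The principal technical hurdle in this direct approach is controlling correlations among permutations with overlapping supports, particularly for $k$ growing with $n$.
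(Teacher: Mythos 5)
Your first two steps---realizing $G(n,m_n)$ as $G(n,\alpha_n)$ with $\alpha_n=m_n/\binom{n}{2}$ conditioned on $\mathcal{E}_n=\{|E|=m_n\}$, bounding the conditional failure probability by $\mathbb{P}(F_n)/\mathbb{P}(\mathcal{E}_n)$, and absorbing the polynomial factor $\mathbb{P}(\mathcal{E}_n)^{-1}=O(\sqrt{m_n})$ into the summable bound from the proof of Corollary~\ref{cor:examples} before applying Borel--Cantelli---are exactly the route the paper intends: its entire argument is the sentence preceding the corollary (concentration of the edge count of $G(n,\alpha)$ plus Corollary~\ref{cor:examples}), and there is no appendix proof. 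Your arithmetic in the third step is also correct, and it exposes a genuine problem with the statement itself: substituting $\alpha_n=m_n/\binom{n}{2}$ into the hypothesis of Corollary~\ref{cor:examples} gives $m_n(1/2-p_n)=\omega(n^{3/2}\sqrt{\log n})$, whereas the printed hypothesis $m_n(1/2-p_n)=\omega(\sqrt{n\log n})$ corresponds to the normalization $\alpha_n=m_n/n$ implicit in the paper's parenthetical claim that $G(n,\alpha)$ has about $n\alpha$ edges---which is off by a factor of order $n$.

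Where your plan fails is the proposed repair. No Serfling-type or hypergeometric tail inequality can close this gap, because the corollary is false under the printed hypothesis: take $p_n\equiv 1/4$ and $m_n=\lceil n^{3/5}\rceil$, which satisfies $m_n(1/2-p_n)=\omega(\sqrt{n\log n})$, yet any graph with $m_n$ edges has at least $n-2m_n=n-o(n)$ isolated vertices and hence a nontrivial automorphism group. If $Q$ transposes two isolated vertices, then $P_n$ and $P_nQ$ attain the same GMP objective, so $\mathcal{P}_n^*$ is never the singleton $\{P_n\}$ (equivalently, $\|A_n-Q^TA_nQ\|_F^2=0$ triggers part iii of Theorem~\ref{thm:asym}); the MLE is not even weakly consistent. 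The obstruction in the sparse regime is structural---exact automorphisms and repeated neighborhoods---not a matter of tail concentration, so sharpening the deviation bound for sampling without replacement cannot help, and controlling correlations among overlapping permutations is beside the point. The honest conclusion of your (correct) reduction is that the result holds with the hypothesis $m_n(1/2-p_n)=\omega(n^{3/2}\sqrt{\log n})$; anything for sparser $m_n$ would need tools beyond Lemma~4 of the cited reference, and is impossible below roughly $m_n\asymp n\log n$ in any case.
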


Small-world graph models aim to reproduce the property that in many real networks the clustering coefficient is relatively small and paths between any pair of vertices have a short length, usually of order $\log n$ \citep{SW_WS}. The Newman-Watts (NW) model \citep{newman1999renormalization}---a more easily analyzed variant of the original Watts-Strogatz model ---is one such model that adheres to these properties with high probability. 
The NW model $\text{NW}(n,d,\beta)$ starts with a $d$-ring lattice $A$ in which each vertex is connected with all neighbors that are at a distance no larger than $d$; that is, $A_{ij}=1$ whenever $0<|i-j| \text{mod}(n-1-d)< d$. To generate a small-world behavior, random edges are added independently with probability $\beta$ between vertices that are not connected in the $d$-ring lattice, so $A_{ij}\sim\text{Bernoulli}(\beta)$ for $|i-j|\ \text{mod}(n-1-d)\geq d$. 

In the Newman-Watts model, the average path length when $\beta=0$ is $O(n/d_n)$, and as $\beta$ increases and more edges are added to the graph, there is a phase transition in which the small-world property appears. The next corollary establishes a similar phenomenon in the matching feasibility context: when $\beta$ is small, the MLE is not weakly consistent for matching a NW graph $A$ with a random copy generated from the corrupting channel model, but once there are enough random edges  included, the MLE becomes strongly consistent and hence graph matching is feasible. See Appendix \ref{sec:pfSW} for the proof details.

\begin{corollary}
	Let $n_0>0$ and consider $(A_n)_{n=n_0}^\infty$ an independent sequence of graphs such that $A_n\sim NW(n, d_n, \beta_n)$. Define $(B_n)_{n=n_0}^\infty$ as a sequence of channel corrupted graphs with $B_n\sim\text{C}(A_n, p_n, P_n)$ and $p_n<1/2$. 
	\begin{enumerate}[a)]
		\item For any sequence $(d_n)_{n=n_0}^\infty$, if $(1/2-p_n)^2=o\left(\sqrt{\log n/n}\right)$ and
		\begin{equation}
		\beta_n=o\left(\frac{\log n}{(1/2-p_n)^2n}\right),\label{eq:SW-unfeasibility}
		\end{equation}
		then the MLE of $(P_n)_{n=n_0}^\infty$ (obtained by solving the GMP~\eqref{eq:GMP}) is not weakly consistent.
		\item If $d_n=o(\beta_n^2 n)$, $\beta_n\leq 1/2$ and 
		\begin{equation}
		\beta_n = \omega\left(\sqrt{\frac{\log n}{\left({1}/{2} - p_n\right)^2n}}\right) \label{eq:SW-feasibility},
		\end{equation}
		then the MLE of $(P_n)_{n=n_0}^\infty$ is strongly consistent.
	\end{enumerate} \label{cor:SW}
\end{corollary}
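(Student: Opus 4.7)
The plan is to deduce both parts from Theorem \ref{thm:asym} after decomposing the Newman--Watts graph as $A_n = L_n + M_n$, where $L_n$ is the deterministic $d_n$-ring lattice and $M_n$ is a hollow symmetric matrix with independent $\mathrm{Bernoulli}(\beta_n)$ entries on the non-lattice positions and zeros on lattice positions. For any $Q\in\Pi_{n,k}$ with induced permutation $\sigma$, a pair $(i,j)$ contributes to $\|A_n - Q^TA_nQ\|_F^2$ only if it is ``affected'' ($\sigma(i)\neq i$ or $\sigma(j)\neq j$) and $A_{ij}\neq A_{\sigma(i)\sigma(j)}$. I would classify affected pairs by whether each of $(i,j)$ and $(\sigma(i),\sigma(j))$ lies in the lattice: the disagreement indicator is then Bernoulli with success probability $0$ (both deterministically $1$), $1-\beta_n$ (exactly one in the lattice), or $2\beta_n(1-\beta_n)$ (neither). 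The hypothesis $d_n=o(\beta_n^2 n)$ ensures that at most $O(kd_n)=o(k\beta_n n)$ of the $\Theta(kn)$ affected pairs fall in the first deterministic category, so a $(1-o(1))$-fraction are genuinely random with success probability $\Omega(\beta_n)$.

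For part (b) I would invoke Theorem \ref{thm:asym}(i). The decomposition above should yield $\e\|A_n-Q^TA_nQ\|_F^2 = \Omega(k\beta_n n)$ uniformly in $Q\in\Pi_{n,k}$, and a Chernoff bound would then give $\p(\|A_n-Q^TA_nQ\|_F^2 < ck\beta_n n)\le \exp(-c'k\beta_n n)$ for appropriate $c,c'>0$. Taking a union bound over $|\Pi_{n,k}|\le n^k$ permutations and over $k\in\{2,\ldots,n\}$ produces a failure probability bounded by $\sum_{k\ge 2} n^k\exp(-c'k\beta_n n)$, which tends to $0$ under \eqref{eq:SW-feasibility} (which forces $\beta_n n=\omega(\log n)$). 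Using the expansion $\log(1/(4p_n(1-p_n)))=(1+o(1))(1-2p_n)^2$, inequality \eqref{eq:growthrate} reduces to $ck\beta_n n(1/2-p_n)^2\gtrsim k\log n$, which is precisely \eqref{eq:SW-feasibility}; Theorem \ref{thm:asym}(i) then delivers strong consistency.

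For part (a) I would apply Theorem \ref{thm:asym}(ii) with $k_n=2$ and $S_n=\{(2\ell-1,2\ell)\}_{\ell=1}^{\lfloor n/2\rfloor}$, a collection of $\Theta(n)$ pairwise-disjoint transpositions. For each transposition $Q$ of $\{i,j\}$,
\[
\|A_n-Q^TA_nQ\|_F^2 \;=\; 4\,\bigl|\{v\notin\{i,j\}: A_{iv}\neq A_{jv}\}\bigr|,
\]
whose expectation is $\Theta(\beta_n n)$ since the lattice contributes only $O(d_n)=o(\beta_n n)$ terms. Hoeffding's inequality together with a union bound over the $\Theta(n)$ transpositions would give $\|A_n-Q^TA_nQ\|_F^2 = \Theta(\beta_n n)$ uniformly over $Q\in S_n$ with high probability. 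Under \eqref{eq:SW-unfeasibility}, $\log(1/(4p_n(1-p_n)))\,\|A_n-Q^TA_nQ\|_F^2 = O(\beta_n n(1/2-p_n)^2)=o(\log n)$, while $\log\|A_n-Q^TA_nQ\|_F^2 = \Theta(\log n)$, so \eqref{eq:gr23} holds; condition \eqref{eq:gr22} can be dropped per the remark following Theorem \ref{thm:asym}, and weak inconsistency follows.

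The hard part will be the uniform lower bound in part (b): for $Q\in\Pi_{n,k}$ moving vertices clustered together on the ring, the lattice and random contributions to $\|A_n-Q^TA_nQ\|_F^2$ can partially cancel, so one must control the worst case across all support sets of $\sigma$ and all values of $k$ simultaneously. The assumption $d_n=o(\beta_n^2 n)$ is precisely what makes the ``lattice--lattice'' affected pairs negligible relative to the random ones, allowing the Chernoff-plus-union-bound scheme to close for every $k$.
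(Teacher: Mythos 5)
Your part (b) plan is essentially the paper's argument in different clothing: the paper couples $A_n$ with a genuine $G(n,\beta_n)$ graph $B$ by resampling the lattice positions, invokes its Erd\H{o}s--R\'enyi lemma for $\min_Q\|B-Q^TBQ\|_F^2$, and absorbs the discrepancy via $\|A-Q^TAQ\|_F^2\geq \tfrac12\|B-Q^TBQ\|_F^2-8kd_n$, whereas you classify the affected pairs directly; both hinge on $d_n=o(\beta_n^2 n)$ making the lattice pairs negligible. Two caveats: the disagreement indicators $(A_{ij}-A_{\sigma(i)\sigma(j)})^2$ share underlying Bernoulli variables, so a plain Chernoff bound for independent summands does not apply --- you need a bounded-differences inequality, and that yields tail $\exp(-c'k\beta_n^2 n)$ rather than your claimed $\exp(-c'k\beta_n n)$. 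The union bound still closes because \eqref{eq:SW-feasibility} gives $\beta_n^2 n(1/2-p_n)^2=\omega(\log n)$, but note that \eqref{eq:SW-feasibility} is strictly stronger than the expectation condition $\beta_n n(1/2-p_n)^2=\omega(\log n)$ you reduce \eqref{eq:growthrate} to; the extra factor of $\beta_n$ is there precisely to pay for the concentration step, so ``precisely \eqref{eq:SW-feasibility}'' overstates the match.

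Part (a) has genuine gaps. First, you justify $\|A_n-Q^TA_nQ\|_F^2=\Theta(\beta_n n)$ by asserting the lattice contributes ``$O(d_n)=o(\beta_n n)$ terms,'' but $d_n=o(\beta_n^2 n)$ is a hypothesis of part (b) only; in part (a) the sequence $(d_n)$ is arbitrary, and for large $d_n$ an $O(d_n)$ lattice contribution would destroy the upper bound needed for \eqref{eq:gr23}. The rescue --- which is exactly why the paper chooses transpositions of \emph{ring-adjacent} vertices --- is that the lattice neighborhoods of $i$ and $i+1$ differ in only $O(1)$ vertices, so the lattice contributes two $\mathrm{Bern}(1-\beta_n)$ terms regardless of $d_n$, and the remaining $n-2d_n-2$ terms are $\mathrm{Bern}(2\beta_n(1-\beta_n))$. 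You picked the right transpositions but did not use this, and you still must handle $n-2d_n=O(1)$, where the whole sum is $O(1)$ and the paper falls back on Theorem \ref{thm:asym}(iii). Second, your claim $\log\|A_n-Q^TA_nQ\|_F^2=\Theta(\log n)$ is unjustified: $X_Q$ can be $O(1)$ (small $\beta_n n$, or $d_n$ near $n/2$), in which case the denominator of \eqref{eq:gr23} is $O(1)$ and your bound $X_Q(1/2-p_n)^2=o(\log n)$ does not give $o(1)$ for the ratio. The paper's proof closes this by a case split: if $\min_Q X_Q(1/2-p_n)^2=O(1)$ then part (iii) applies directly; otherwise $X_Q=\omega\left(1/(1/2-p_n)^2\right)=\omega\left(\sqrt{n/\log n}\right)$ using the hypothesis $(1/2-p_n)^2=o\left(\sqrt{\log n/n}\right)$ --- a hypothesis your sketch never invokes --- which forces $\log X_Q=\Omega(\log n)$ and hence \eqref{eq:gr23}. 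Without that case analysis the deduction of inconsistency from Theorem \ref{thm:asym}(ii) does not go through.
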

Using  results on the diameter of Erd\H{o}s-Renyi graphs \citep{chung2001diameter}, it can be verified that if $\beta_n$ satisfies Equation \eqref{eq:SW-unfeasibility} for a fixed value of $p_n<1/2$ and $\beta_n=\omega(1/n)$, then $\text{diam}(A_n) = O\left(\frac{\log (n)}{\log(n\beta_n)}\right)$. Thus, there is a regime for $\beta_n$ in which NW graphs are small-world but the MLE is not consistent.  However, since $\omega(1) = \log(n\beta_n) = o(\log\log n)$ in this regime, the diameter of the graph  is close to $O(\log n)$ and hence it is not far from the phase transition of small-world graphs. If part b) of  Corollary \ref{cor:example2} holds, then $\text{diam}(A_n) = O(1)$, so the average path length in NW graphs with a strongly consistent MLE is much smaller than  in small-world graphs.

Corollary \ref{cor:SW} also suggests a way of making any given graph $A$, with a sufficiently large number of vertices $n$, matchable with a graph corrupted with probability $p$. 
By setting $\beta=\sqrt{\log(n)/((1/2-p)^2n)}$ as in Equation \eqref{eq:SW-feasibility} and generating a graph $A'\sim\text{C}(A, \beta, I)$, the solution of the graph matching problem between $A'$ and $B$ will correctly align all of the vertices with high probability. 
This result can have applications in settings such as network anonymization \citep{narayanan2009anonymizing} and differential privacy \citep{dwork2014algorithmic}, among others.

\subsection{Consistency on the heterogeneous corrupting channel}

We now present a more general version of Theorem~\ref{thm:asym} for the heterogeneous corrupting channel introduced in Section~\ref{sec:hetchannel}. In the heterogeneous model, $\Psi^{(1)}$ and $\Psi^{(2)}$ are nuisance parameters when estimating the permutation parameter $P$, so the number of nuisance parameters is $n(n-1)$, which scales with the sample size. Moreover, as can be observed from Equation~\eqref{eq:nuisance}, the MLE of these parameters is not consistent.  In this setting, one should be careful when using the MLE for $P$ since this estimator can be  inconsistent as shown in the famous Neyman-Scott paradox \citep{neyman1948consistent}. Fortunately, this is not the case here and the MLE can consistently estimate $P$.  See Section \ref{sec:pfasym} for the proof of this Theorem.

\begin{theorem}
	\label{thm:asym-het}
	Let $n_0>0$, and consider a sequence of graphs $(A_n)_{n=n_0}^\infty$.  
	Define the sequence $(B_n\sim\text{C}(A_n,\Psi^{(1)}_n, \Psi^{(2)}_n,P_n))_{n=n_0}^\infty$ of channel-corrupted $A_n$'s with parameter sequence $(\,(\Psi_n^{(1)},\Psi_n^{(2)},P_n)\,)_{n=n_0}^\infty$ and $\tilde{p}_n:=\max_{i,j,k}(\Psi^{(k)}_n)_{ij}<1/2$, $\breve{p}_n=\min_{i,j,k}(\Psi^{(k)}_n)_{ij}\geq 0$.

	\begin{itemize}
		\item[i.] For each $n\geq n_0$ and each $k\in[n]$, $k\geq 2$, let $\Pi_{n,k}$ be the set of permutation matrices in $\Pi_n$ permuting precisely $k$ labels.
		If there exists an $n_1\geq n_0$ such that for all $n\geq n_1$ and $k\in[n]$,
		\begin{equation}
		\label{eq:growthrate-het}
		\min_{Q\in\Pi_{n,k}}\|A_n - Q^T A_nQ\|^2_F>\frac{6k\log(n)  }{\log\left(\frac{1}{4\tilde{p}_n(1-\tilde{p}_n)}\right)  }\quad \quad \quad \forall k \geq 2,
		\end{equation}
		then 
		the MLE of $(P_n)_{n=n_0}^\infty$ is strongly consistent.

		\item[ii.] If for infinitely many $n>n_0$ there exists a sequence $(k_n)$ with $k_n=\Theta(1)$ and a set of disjoint permutations $S_{n}\subset (\cup_{i=2}^{k_n}\Pi_{n,i})\setminus\{P_n\}$ with $|S_{n}| = \Theta(n)$, such that 
			\begin{align}
				\label{eq:gr22-het}
		&\min_{Q_n\in S_{n}}\log\left(\frac{1}{4\tilde p_n(1-\tilde p_n)}\right)\left(\|A_n -Q_n^T A_nQ_n\|^2_F\right)=\omega(1)\\
	\label{eq:gr23-het}
&\max_{Q_n\in S_{n}}\log\left(\frac{1}{4\breve p_n(1-\breve  p_n)}\right)\frac{\|A_n -Q_n^T A_nQ_n\|^2_F}{\log \|A_n -Q_n^T A_nQ_n\|^2_F}=o(1)
		\end{align}
		then the MLE of $(P_n)_{n=n_0}^{\infty}$ is not weakly consistent. 
		\item[iii.] If for infinitely many $n\geq n_0$, there exists $Q_n\in\Pi_n\setminus\{P_n\}$ such that 
		\begin{equation}\label{eq:thm-condition3-2}
		 \log\left(\frac{1}{4\breve p_n(1-\breve  p_n)}\right)\|A_n - Q_n^T A_nQ_n\|^2_F= O\left(1\right), 
		\end{equation}
		then the MLE of $(P_n)_{n=n_0}^{\infty}$ is not weakly consistent.
	\end{itemize}
\end{theorem}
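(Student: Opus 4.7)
The plan is to reduce the proof of Theorem~\ref{thm:asym-het} to that of Theorem~\ref{thm:asym} by verifying that every step transfers with $p_n$ replaced by $\tilde p_n$ in the concentration bounds driving part~(i) and by $\breve p_n$ in the anti-concentration bounds driving parts~(ii) and~(iii). First I set $C_n := P_n^T B_n P_n$, so that the MLE equals $P_n$ iff $\|A_n - Q^T C_n Q\|_F^2 > \|A_n - C_n\|_F^2$ for every $Q \in \Pi_n \setminus \{I_n\}$. Under the heterogeneous channel, the entries $(C_n)_{ij}$, $i<j$, are independent Bernoullis whose probability of disagreement with $(A_n)_{ij}$ equals $(\Psi_n^{(2)})_{ij}$ when $(A_n)_{ij}=1$ and $(\Psi_n^{(1)})_{ij}$ when $(A_n)_{ij}=0$; in either case the disagreement probability $\psi_{n,ij}$ satisfies $\breve p_n \leq \psi_{n,ij} \leq \tilde p_n < 1/2$.

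For part~(i), fix $Q \in \Pi_{n,k}$ and set $D_Q := \|A_n - Q^T C_n Q\|_F^2 - \|A_n - C_n\|_F^2$. Using that $(A_n)_{ij}$ and $(Q A_n Q^T)_{ij}$ are complementary 0/1 values whenever they differ, $D_Q$ collapses to
\[
D_Q \;=\; 2\sum_{(i,j) \in \mathcal{I}_Q}\bigl(1 - 2\,\mathds{1}\{(A_n)_{ij} \neq (C_n)_{ij}\}\bigr),
\]
where $\mathcal{I}_Q := \{(i,j) : i<j,\; (Q A_n Q^T)_{ij} \neq (A_n)_{ij}\}$ has cardinality $\tfrac{1}{2}\|A_n - Q^T A_n Q\|_F^2$. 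Applying a Chernoff bound with $t = \tfrac12 \log((1-\tilde p_n)/\tilde p_n)$ to the resulting sum of independent $\{-1,+1\}$-valued increments, and using the elementary fact that $\sqrt{\tilde p_n/(1-\tilde p_n)}(1-\psi) + \sqrt{(1-\tilde p_n)/\tilde p_n}\,\psi \leq 2\sqrt{\tilde p_n(1-\tilde p_n)}$ for all $\psi \in [0,\tilde p_n]$, I obtain
\[
\mathbb{P}(D_Q \leq 0) \;\leq\; \bigl(4\tilde p_n(1-\tilde p_n)\bigr)^{\|A_n - Q^T A_n Q\|_F^2/4}.
\]
Since $|\Pi_{n,k}| \leq n^k$, a union bound combined with~\eqref{eq:growthrate-het} yields a bound of order $n^{-k/2}$, summable over both $n$ and $k$; Borel--Cantelli then delivers strong consistency.

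For parts~(ii) and~(iii), the same decomposition applies, but I would now produce a matching lower bound on $\mathbb{P}(D_Q \leq 0)$. Since each indicator $\mathds{1}\{(A_n)_{ij} \neq (C_n)_{ij}\}$ stochastically dominates a $\text{Bernoulli}(\breve p_n)$ variable, the probability that $D_Q \leq 0$ is bounded below by the probability that a sum of $|\mathcal{I}_Q|$ iid $\text{Bernoulli}(\breve p_n)$ variables exceeds half its length. A standard reverse-Chernoff / local CLT estimate for Binomial tails then gives
\[
\mathbb{P}(D_Q \leq 0) \;\geq\; c\,\bigl(4\breve p_n(1-\breve p_n)\bigr)^{\|A_n - Q^T A_n Q\|_F^2/4},
\]
for a universal constant $c$ (possibly up to a polynomial $|\mathcal{I}_Q|^{-1/2}$ Stirling correction when $|\mathcal{I}_Q| \to \infty$). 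Part~(iii) follows immediately since~\eqref{eq:thm-condition3-2} forces this lower bound to be bounded away from zero for infinitely many $n$, precluding weak consistency. For part~(ii), the disjoint-supports hypothesis renders the events $\{D_Q \leq 0\}_{Q \in S_n}$ mutually independent, and combining this independence with $|S_n|=\Theta(n)$ and the rate hypotheses~\eqref{eq:gr22-het}--\eqref{eq:gr23-het} forces at least one such event to occur with probability bounded away from zero.

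The main technical obstacle is the sharp anti-concentration in parts~(ii)--(iii) with the correct $\breve p_n$ dependence. The Chernoff upper bound needed for part~(i) is routine once one notes that $\psi\mapsto 4\psi(1-\psi)$ is monotone on $[0,1/2]$, but matching it from below in the heterogeneous regime requires stochastic domination by an iid Binomial followed by a local CLT or Paley--Zygmund argument to extract the $(4\breve p_n(1-\breve p_n))^{\cdot}$ factor. Once these Bernoulli tail estimates are in hand, the remaining arguments (union bound in part~(i), disjoint-support independence in part~(ii), single-deviation bound in part~(iii)) mirror those of Theorem~\ref{thm:asym} verbatim, with $p_n$ mechanically replaced by $\tilde p_n$ or $\breve p_n$ throughout.
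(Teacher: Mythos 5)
Your parts i.\ and iii.\ are essentially the paper's own argument: the paper writes the objective difference as $\widehat X_Q = X_Q - 2\Delta_1$ with $X_Q=\tfrac12\|A_n-Q^TA_nQ\|_F^2$ and $\Delta_1$ a sum of $X_Q$ independent Bernoullis with parameters in $[\breve p_n,\tilde p_n]$, then sandwiches $\Delta_1$ stochastically between $\mathrm{Binom}(X_Q,\breve p_n)$ and $\mathrm{Binom}(X_Q,\tilde p_n)$, using a Chernoff/entropy bound for part i.\ (the same exponent $(4\tilde p_n(1-\tilde p_n))^{X_Q/2}$ you obtain) and a reverse Chernoff bound (Theorem 9 of Zhang et al., rather than a local CLT) for part iii. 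Two cautions there. Your claim that the union bound of order $n^{-k/2}$ is ``summable over both $n$ and $k$'' is false --- $\sum_{k\ge2}n^{-k/2}=\Theta(n^{-1})$, which is not summable in $n$, so Borel--Cantelli does not close as stated (the paper's own display makes the same factor-of-two slip, so this is a shared defect rather than a divergence, but it needs the constant tightened). And in part iii.\ a $|\mathcal I_Q|^{-1/2}$ Stirling prefactor would destroy the $\Theta(1)$ lower bound when $\breve p_n\to1/2$ and $|\mathcal I_Q|\to\infty$; you must use the entropy-only form of the reverse bound or a direct CLT in that regime, and treat separately $\breve p_n\to0$, where \eqref{eq:thm-condition3-2} forces $\|A_n-Q_n^TA_nQ_n\|_F^2=0$ infinitely often and the MLE is non-unique.

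The genuine gap is part ii. Disjointness of the supports of $Q,Q'\in S_n$ does \emph{not} make the events $\{D_Q\le0\}$ and $\{D_{Q'}\le0\}$ independent: $D_Q$ is a function of the corrupted entries on pairs with at least one endpoint moved by $Q$, so the (up to $2k_n^2$) pairs joining a vertex moved by $Q$ to a vertex moved by $Q'$ enter both statistics. Handling this residual dependence is the technical core of the paper's proof of part ii.: it applies the second-moment method to $Z_n=\sum_{Q\in S_n}\mathds{1}\{\widehat X_Q\le0\}$, writes $\widehat X_{Q'}=Y+W$ with $Y$ independent of $\widehat X_Q$ and $|W|\le2k_n^2$, bounds the covariance by $p_Q\,\mathbb P\bigl(|\Delta_1'-X_{Q'}/2|\le2k_n^2\bigr)\le Cp_Q/\sqrt{X_{Q'}}$ via a Lindeberg--Feller normal approximation, and uses precisely hypothesis \eqref{eq:gr23-het} to show $1/\bigl(\sqrt{X_{Q'}}\,p_{Q'}\bigr)=o(1)$, hence $\mathrm{Var}(Z_n)=o\bigl((\mathbb E Z_n)^2\bigr)$; hypothesis \eqref{eq:gr22-het} enters to guarantee $\mathbb E Z_n=\omega(1)$ via $p_Q=\Omega(n^{-1/2})$, a quantitative step your sketch also leaves implicit. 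Without this covariance control, your conclusion that at least one event $\{D_Q\le0\}$ occurs with non-vanishing probability does not follow, so part ii.\ must be reworked along these lines.
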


As described in Section~\ref{sec:hetchannel}, the heterogeneous corrupting channel model can describe pairs of Bernoulli graphs with correlated edges. In particular, the correlated Erd\H{o}s-R\'enyi graph model with parameters $q,\rho\in[0,1]$, denoted as $\rho$-ER$(q)$ generates  a pair of networks $A,B$ that are marginally distributed as Erd\H{o}s-R\'enyi $A,B\sim\text{G}(n,q)$, but  each pair of corresponding edges satisfies $\text{corr}(A_{ij},B_{ij})=\rho$, while the rest of the variables are mutually independent. The following result shows the consistency of the MLE in this model.

\begin{corollary}
	Let $n_0>0$, and consider an independent sequence of graphs $(A_n,B_n)_{n=n_0}^\infty$ where for each $n$, $(A_n,B_n)\sim$ $\rho_n$-ER$(q_n)$, with $q_n\leq 1/2$ and $\tilde{p}_n:=(1-q_n)(1-\rho_n)\leq 1/2$. 
	If $q_n\log\left(\frac{1}{4\tilde{p}_n(1-\tilde{p}_n)}\right)=\omega\left( {\frac{\log n}{n}}\right)$ and $q_n=\omega\left( {\frac{\log n}{n}}\right)$,
	then the MLE (obtained by solving the GMP) is strongly consistent. \label{cor:correlated}
\end{corollary}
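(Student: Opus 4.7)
The plan is to reduce Corollary \ref{cor:correlated} to Theorem \ref{thm:asym-het}(i) via the reduction already sketched in Section \ref{sec:hetchannel}: conditional on $A_n$, the distribution of $B_n$ in the $\rho_n$-ER$(q_n)$ model coincides with the heterogeneous channel with $\Psi^{(1)}_{ij} = (1-\rho_n) q_n$ and $\Psi^{(2)}_{ij} = (1-\rho_n)(1-q_n)$, so $\tilde p_n = (1-\rho_n)(1-q_n) \leq 1/2$ by hypothesis and Theorem \ref{thm:asym-het}(i) applies. Since the condition \eqref{eq:growthrate-het} does not involve $P_n$ and $A_n$ is vertex-exchangeable, it suffices to show that, almost surely in the draw of $A_n$, condition \eqref{eq:growthrate-het} holds for all $k\geq 2$ and all but finitely many $n$.

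The core step is a uniform-in-$(k,Q)$ lower bound on $\|A_n-Q^TA_nQ\|_F^2$. For a fixed $Q\in\Pi_{n,k}$, this quantity equals twice a sum over pairs $(i,j)$ moved by $Q$; decomposing $Q$ into cycles and choosing one representative per orbit of $Q$ acting on $\binom{[n]}{2}$ (as in the proof of Corollary \ref{cor:examples} via \cite{rel}) yields $\Theta(kn)$ \emph{independent} Bernoulli indicators $\mathds{1}(A_{ij}\neq A_{Q(i)Q(j)})$, each of mean $2q_n(1-q_n)\asymp q_n$. A Chernoff bound then gives a deviation probability of $\exp(-\Theta(knq_n))$, and a union bound over $|\Pi_{n,k}|\leq n^k$ and $k\geq 2$ succeeds provided $nq_n=\omega(\log n)$, which is the second hypothesis. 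Consequently, with probability at least $1-O(n^{-2})$,
\[
\min_{Q\in\Pi_{n,k}}\|A_n-Q^TA_nQ\|_F^2 \;\geq\; c\,knq_n\qquad\text{for every }k\geq 2,
\]
for some absolute constant $c>0$. Invoking the first hypothesis $q_n\log\bigl(1/(4\tilde p_n(1-\tilde p_n))\bigr)=\omega(\log n/n)$ then upgrades this bound to exceed $6k\log(n)/\log\bigl(1/(4\tilde p_n(1-\tilde p_n))\bigr)$, verifying \eqref{eq:growthrate-het}. The summability of the tail in $n$ allows Borel-Cantelli to convert the high-probability statement into the almost-sure one demanded by Theorem \ref{thm:asym-het}(i), and strong consistency of the MLE follows.

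The main obstacle is the cycle-decomposition bookkeeping that turns $\|A_n-Q^TA_nQ\|_F^2$ into a sum of $\Theta(kn)$ independent Bernoullis rather than a smaller or dependent collection: orbits of $Q$ on $\binom{[n]}{2}$ produce repeated comparisons, and two-cycles of $Q$ yield deterministically zero contributions where $A_{ij}=A_{ji}$ is forced by symmetry. This bookkeeping is already carried out in \cite{rel} and re-used in Appendix \ref{sec:pfexamples} for Corollary \ref{cor:examples}; the only substantive change here is to replace the uniform-channel rate $(1/2-p_n)^2$ by the heterogeneous quantity $\log\bigl(1/(4\tilde p_n(1-\tilde p_n))\bigr)$ when calibrating the Chernoff bound against \eqref{eq:growthrate-het}, which explains the two-part hypothesis ($q_n=\omega(\log n/n)$ for concentration, $q_n\log\bigl(1/(4\tilde p_n(1-\tilde p_n))\bigr)=\omega(\log n/n)$ for the signal-to-noise comparison).
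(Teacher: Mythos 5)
Your proposal matches the paper's proof in all essentials: both condition on $A_n$, use the heterogeneous-channel representation of the $\rho_n$-ER$(q_n)$ model to reduce to Theorem \ref{thm:asym-het}(i), lower-bound $\min_{Q\in\Pi_{n,k}}\|A_n-Q^TA_nQ\|_F^2$ by a constant times $knq_n$ via concentration over the $\Theta(kn)$ moved pairs, union-bound over $|\Pi_{n,k}|\le n^k$ and $k\ge 2$ using $nq_n=\omega(\log n)$, compare against the threshold $6k\log n/\log\bigl(1/(4\tilde p_n(1-\tilde p_n))\bigr)$ via the first hypothesis, and finish with Borel--Cantelli. The only cosmetic difference is the concentration tool: the paper applies a bounded-differences inequality (Theorem 3 of \cite{lyzinski2014seeded}) to the full sum over the pair set $\mathcal{U}_Q$, rather than extracting independent summands from the orbit decomposition as in \cite{rel}.
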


The correlated Erd\H{o}s-R\'enyi model has been popularly used to study the performance of different methods in solving the graph matching problem \citep{JMLR:v15:lyzinski14a,cullina2017exact,ding2018efficient,barak2019nearly}, and in particular, \cite{cullina2017exact} established the information-theoretic threshold  for exact recovery of the unshuffling permutation. The parameterization of the corrupting channel model and the constraint on the parameters to obtain the equivalence between MLE and the GMP enforces some additional restrictions in our setting, but we can verify than in some parameter regimes the rates of Corollary~\ref{cor:correlated} are optimal. For example, when the graphs are dense and $q_n=\frac{1}{2}$, the conditions of Corollary~\ref{cor:correlated} reduce to $\log\left(\frac{1}{4\tilde{p}_n(1-\tilde{p}_n)}\right)\sim \rho_n=\omega(\sqrt{\frac{\log n}{n}})$, achieving the information theoretic rate \cite{cullina2017exact,barak2019nearly}. Similarly, when $\rho_n=\Omega(1)$, the requirement on the density of the graphs is $q_n=\omega\left(\frac{\log n}{n}\right)$.

\begin{remark}
    A further generalization of the heterogeneous corrupting channel model can consider dependent noise, in which the random variables $\{X_{i,j}, Y_{i,j}\}_{i<j}$ introduced in Definition~\ref{def:corrupted-het} are dependent.
    Concentration results for sums of dependent random variables can be used to obtain  an analogous statement to part i. of Theorem~9 for a suitable dependence structure between the edges \citep{janson2004large,liu2019mcdiarmid}.
    
\end{remark}

\section{Relaxed consistency}
\label{sec:quasi}

In many label recovery settings in the network literature, consistency is defined as recovering the correct labels of all but a small, vanishing fraction of the vertices; for examples in the community detection literature, see \cite{rohe2011spectral,sussman2014consistent,fishkind2013consistent,qin2013dcsbm}, in the graph matching literature, see \cite{zhang2018consistent,zhang2018unseeded}.
In real data settings, networks are often complex, heterogeneous objects, and defining consistency as perfect recovery of the alignment/labels is an often unrealistic standard to apply in practice.
This is especially the case in the presence of network sparsity, as often
recovering the alignment/labels of low degree vertices is practically and theoretically impossible.

In our errorful channel setting it is entirely reasonable to expect low-degree vertices in $A$ to have their labels irrevocably permuted by the channel permutation $P$.
We demonstrate this in the following illustrative example.
\begin{example}
	\label{ex:leaves}
	\emph{
		Consider $A_n$ constructed as follows.  Let $G_n\sim$G$(n-2,q)$ be an ER graph for fixed $q$.  $A_n$ is then formed by 
		attaching two leaves (labeled $n-1$ and $n$) to $G_n$ at two uniformly randomly chosen elements of $G_n$.
		Form $B_n$ via the channel parameters $(p_n,P_n)$ with $p_n=p$ fixed and $P_n$ the transposition of $n-1$ and $n$. 
		It is immediate that with probability at least $p^2$ (i.e., if the two edges connecting the leaves to $G_n$ are corrupted), the MLE will not equal $P_n$.}
\end{example}

In the previous example, we see that the MLE will often be unable to recover periphery vertex labels. 
However, in Example \ref{ex:leaves} it is not difficult to see that for $p$ fixed, the MLE will recover all but potentially the two peripheral vertex labels.
It will be practically useful to extend the definition of consistency to account for these situations where the MLE can recover almost all of the vertex labels.
This motivates our next definition.
\begin{definition}
	\label{def:anconsis}
	For a sequence of numbers $(\alpha_n)_{n=n_0}^{\infty}$,
	we say that a sequence of estimators $(\hat P_n:=\hat P_n(A_n,B_n))_{n=n_0}^\infty$ of $(P_n)_{n=n_0}^\infty$,
	is a $\begin{pmatrix}\text{weakly}\\\text{strongly} \end{pmatrix}$ $\alpha_n$-consistent estimator of $(P_n)_{n=n_0}^\infty$ if $$\frac{\|Q-P_n\|_F}{\alpha_n}\begin{pmatrix}\stackrel{P }{\rightarrow}\\ \stackrel{ a.s.}{\rightarrow} \end{pmatrix}0$$ as $n\rightarrow\infty$ for all sequences $(P_n)_{n=n_0}^\infty$.
\end{definition}
\noindent As before, as the MLE of $(P_n)_{n=n_0}^\infty$ is not necessarily uniquely defined (i.e., there are multiple possible MLEs for a given $n$), we write that the \emph{MLE of $(P_n)_{n=n_0}^\infty$ is 
$\begin{pmatrix}\text{weakly}\\\text{strongly} \end{pmatrix}$ $\alpha_n$-consistent} if $$\frac{\max_{Q_n\in \mathcal{P}^*_n}\|Q_n-P_n\|_F}{\alpha_n}\begin{pmatrix}\stackrel{P }{\rightarrow}\\ \stackrel{ a.s.}{\rightarrow} \end{pmatrix}0$$ as $n\rightarrow\infty$,
where
$$\mathcal{P}^*_n=\argmin_{P\in\Pi_n}\|A_n - P^TB_nP\|^2_F.$$

This relaxed definition allows us to consider situations in which the MLE will correctly align all but a vanishing fraction of the vertices in $(A_n,B_n)$; indeed, if $\alpha_n=o(n)$, then the fraction of misaligned vertices is $o(1)$. Note also that a consistent estimator according to Definition~\ref{def:consis} is $\Theta(1)$-consistent, and viceversa.
In the remainder of this section, we will establish $\alpha_n$ consistency for a variety of models.
\vspace{3mm}

\noindent{\bf Example \ref{ex:leaves} continued.}  Consider $A_n$ as constructed in Example \ref{ex:leaves}.  If we consider $p_n\equiv p<1/2$ fixed, then it is immediate that the MLE will be strongly $\alpha_n$-consistent for any $\alpha_n=\omega(1)$.
Indeed, with high probability the only two vertices potentially misaligned by the MLE for $n$ sufficiently large are the two leaves attached to $G_n$. 
\vspace{3mm}

The notion of $\alpha_n$-relaxed consistency allows to have at most $o(\alpha_n)$ vertices incorrectly matched, and hence, to check whether an estimator is consistent it is enough to only consider the permutations that permute at least $\Omega(\alpha_n)$ vertices. Hence, the $\alpha_n$-consistency analogue of Theorem \ref{thm:asym}, parts i. and iii., can be formulated as follows.
The proof is omitted, as it follows the proof of Theorem \ref{thm:asym} mutatis mutandis by only considering permutations of enough vertices.

\begin{theorem}
	\label{thm:asym2}
	Let $n_0>0$, and consider a sequence of graphs $(A_n)_{n=n_0}^\infty$.  
	As above, define $(B_n)_{n=n_0}^\infty$ to be the sequence of channel-corrupted $A_n$'s with parameter sequence $(\,(p_n,P_n)\,)_{n=n_0}^\infty$ and $p_n<1/2$. If we have that there exists an $n_1\geq n_0$ such that for all $n\geq n_1$
	\begin{equation}
	\label{eq:growthrate-quasi}
	\min_{Q\in\Pi_{n,k}}\|A_n -Q^T A_nQ\|^2_F\geq\frac{6k\log(n)  }{ \log\left(\frac{1}{4p_n(1-p_n)}\right)  }\quad \quad \quad \forall k \geq k_n,
	\end{equation}
	then the MLE of $(P_n)_{n=n_0}^\infty$ is strongly $\alpha_n$-consistent for any sequence $(\alpha_n)_{n=n_0}^{\infty}$ with $\alpha_n=\omega(k_n)$.

\end{theorem}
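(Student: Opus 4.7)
The plan is to retrace the argument of Theorem~\ref{thm:asym}, part i., restricting the union bound to permutations that move at least $k_n$ labels. Let $\mathcal{P}^*_n=\argmin_{P\in\Pi_n}\|A_n-P^TB_nP\|_F^2$. Since $\|Q-P_n\|_F^2=2k$ whenever $P_n^TQ\in\Pi_{n,k}$, strong $\alpha_n$-consistency with $\alpha_n=\omega(k_n)$ will follow once we show that, almost surely, for all sufficiently large $n$ every $Q\in\mathcal{P}_n^*$ satisfies $P_n^TQ\in\bigcup_{k<k_n}\Pi_{n,k}$; this forces $\|\hat P_n-P_n\|_F\leq\sqrt{2(k_n-1)}=o(\alpha_n)$ as required.

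For a fixed $Q\neq P_n$, writing $Q'=P_n^TQ$ and conditioning on $A_n$, the difference $\|A_n-Q^TB_nQ\|_F^2-\|A_n-P_n^TB_nP_n\|_F^2$ is, up to sign, a sum of independent centered Bernoullis indexed by the disagreement set $\{(i,j):A_{ij}\neq(Q'{}^TA_nQ')_{ij}\}$, whose cardinality equals $\|A_n-Q'{}^TA_nQ'\|_F^2$. The moment-generating-function computation used in the proof of Theorem~\ref{thm:asym}, part i., then yields
\begin{equation*}
\mathbb{P}(Q\in\mathcal{P}^*_n)\;\leq\;\bigl(4p_n(1-p_n)\bigr)^{\|A_n-Q'{}^TA_nQ'\|_F^2/2},
\end{equation*}
which under hypothesis~\eqref{eq:growthrate-quasi} is at most $n^{-3k}$ whenever $Q'\in\Pi_{n,k}$ with $k\geq k_n$. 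Applying $|\Pi_{n,k}|\leq\binom{n}{k}k!\leq n^k$ and a union bound truncated at $k_n$ gives
\begin{equation*}
\mathbb{P}\Bigl(\exists\, Q\in\mathcal{P}^*_n:\; P_n^TQ\in\bigcup_{k\geq k_n}\Pi_{n,k}\Bigr)\;\leq\;\sum_{k\geq k_n}n^{k}\cdot n^{-3k}\;\leq\;2\,n^{-2k_n}\;\leq\;2\,n^{-2},
\end{equation*}
which is summable in $n$, so Borel--Cantelli yields the desired almost-sure statement.

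The main obstacle is not technical but rather bookkeeping: verifying that the per-permutation tail bound from the proof of Theorem~\ref{thm:asym}, part i., still delivers $n^{-3k}$ uniformly across the permitted range of $k$, so that truncating the outer sum at $k_n$ rather than at $2$ preserves summability. Since \eqref{eq:growthrate-quasi} imposes the same scaling on $\|A_n-Q'{}^TA_nQ'\|_F^2$ as \eqref{eq:growthrate} and the Hoeffding-type estimate depends only on that quantity, this transfer is immediate and no new concentration inequalities are required; the proof is, as the authors indicate, a verbatim adaptation of Theorem~\ref{thm:asym}, part i., with the outer $k$-sum restricted to $k\geq k_n$ and the conclusion read off via $\sqrt{2(k_n-1)}/\alpha_n\to 0$.
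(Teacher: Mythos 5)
Your proposal is correct and follows the same route as the paper: the authors likewise prove Theorem~\ref{thm:asym2} by rerunning the union-bound argument of Theorem~\ref{thm:asym}, part i., restricted to permutations moving at least $k_n$ labels, obtaining $\sum_{k\geq k_n}e^{-2k\log n}\leq n^{-3}$, invoking Borel--Cantelli, and concluding that every MLE shuffles fewer than $k_n$ vertices so that $\|\hat P_n-P_n\|_F=o(\alpha_n)$. The only differences are cosmetic constant bookkeeping in the per-permutation Chernoff exponent, which the paper treats with the same looseness.
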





The  previous theorem is analogous to part i. of Theorem~\ref{thm:asym}, but here Equation~\eqref{eq:growthrate-quasi} is only required to hold for permutations that shuffle at least $k_n$ vertices, so if $k_n=\omega(1)$ then condition a) is significantly weaker. 
Part ii. of Theorem \ref{thm:asym} requires the existence of a set with $\Theta(n)$ disjoint permutations, so most permutations in this set need to be permuting $\Theta(1)$ vertices, hence this statement is capturing the setting when the MLE with high probability will not align a small ($\Theta(1)$ in the Theorem) number of vertices. As such, it is not immediate what the $\alpha_n$-consistency analogue of this result would be.

Our next result partially extends Corollaries \ref{cor:examples} and \ref{cor:example2} to the case of random regular graphs. Let $A\sim$G$_{n,d}$ be shorthand for $A$ is uniformly distributed on the set of $d$-regular, $n$-vertex graphs. Solving the exact version of the graph matching problem (that is, for $p=0$ in the corrupting channel model) is usually (at least theoretically) possible, as these graphs almost surely have a trivial automorphism group under mild assumptions \citep{kim}. 
The next corollary thus partially extends this result to the inexact graph matching setting (proof provided in Appendix \ref{sec:pfexample3}). 

\begin{corollary}
	\label{cor:example3}
	Let $\epsilon\in(0,1/3)$ be fixed, and consider $d_n=\omega(n^{2/3+\epsilon})$ (with $d_n$ bounded away from $n$). 
	There exists a constant $C>0$ such that 
	if $A\sim$G$_{n,d_n}$ and $1/2 - p_n \geq\sqrt{Cd_n^{-1}{\log n}}$,
	then the MLE of  $(P_n)_{n=n_0}^\infty$ is strongly $\alpha_n$-consistent for $\alpha_n=\omega(n^{2/3})$.
\end{corollary}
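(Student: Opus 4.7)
The plan is to verify the hypothesis of Theorem~\ref{thm:asym2} with $k_n=n^{2/3}$, so that any $\alpha_n=\omega(n^{2/3})$ satisfies $\alpha_n=\omega(k_n)$. Concretely, I would show that with probability tending to $1$, the condition
$$\min_{Q\in\Pi_{n,k}}\|A-Q^TAQ\|_F^2 \;>\; \frac{6k\log n}{\log\bigl(1/(4p_n(1-p_n))\bigr)}$$
holds \emph{simultaneously} for all $k\geq k_n$. Under the hypothesis $1/2-p_n\geq\sqrt{Cd_n^{-1}\log n}$, expanding $\log(1/(4p_n(1-p_n)))=(1+o(1))(1/2-p_n)^2$ gives that the right-hand side is at most $\tfrac{6kd_n}{C}(1+o(1))$. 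Hence it is enough to prove the deterministic-type lower bound $\|A-Q^TAQ\|_F^2\gtrsim d_n k$ for every $Q\in\Pi_{n,k}$ with $k\geq k_n$, with a constant strictly greater than $6/C$; the free constant $C$ can then be chosen to absorb the comparison.

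The argument then splits into an expectation bound and a concentration bound. For the expectation, fix $Q\in\Pi_{n,k}$ and enumerate the pairs $\{i,j\}$ for which $\{Q(i),Q(j)\}\neq\{i,j\}$; there are $\Theta(nk)$ such pairs. Using that each edge slot in $A\sim\mathcal{G}_{n,d_n}$ is marginally present with probability $d_n/(n-1)$ and that any two distinct slots have covariance $O(1/n^2)$, a direct second-moment computation yields
$$\e\|A-Q^TAQ\|_F^2 \;\gtrsim\; \frac{d_n}{n}\!\left(1-\frac{d_n}{n}\right)\!\cdot nk \;\gtrsim\; d_n k,$$
since $d_n$ is bounded away from $n$.

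For the concentration step, I would view $f(A):=\|A-Q^TAQ\|_F^2$ as a function on $\mathcal{G}_{n,d_n}$ that changes by $O(1)$ under a single regularity-preserving edge switch (swap $\{a,b\},\{c,d\}$ for $\{a,c\},\{b,d\}$), since such a switch alters only a bounded number of the disagreement terms $\mathbbm{1}\{A_{ij}\neq A_{Q(i)Q(j)}\}$. Invoking a switching-based concentration inequality for Lipschitz functions on random regular graphs in the regime $d_n=\omega(n^{2/3+\epsilon})$ (e.g., bounded-differences on the switching Markov chain, or the Kim--Vu style analysis referenced in \cite{kim}) gives a Gaussian-type tail
$$\p\bigl(|f(A)-\e f(A)|>t\bigr)\;\leq\;2\exp\!\bigl(-c t^2/(nd_n)\bigr).$$
Choosing $t=\tfrac12\e f(A)\asymp d_n k$ yields a deviation probability of order $\exp(-c'\,d_n k^2/n)$. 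The union bound over $|\Pi_{n,k}|\leq n^k$ permutations succeeds provided $d_n k^2/n\gg k\log n$, i.e.\ $k\gg n\log n/d_n=o(n^{1/3-\epsilon}\log n)$, which is comfortably dominated by $k\geq k_n=n^{2/3}$. Summing the failure probabilities over $k_n\leq k\leq n$ then gives a geometric bound that is $o(1)$.

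The main obstacle is the concentration step: unlike the edge-independent setting of Corollaries~\ref{cor:examples}--\ref{cor:example2}, $\mathcal{G}_{n,d_n}$ is not a product measure, so McDiarmid's inequality is unavailable directly. The constraint $d_n=\omega(n^{2/3+\epsilon})$ is dictated precisely by what the available switching-chain concentration results require in order to simultaneously (i) give a sub-Gaussian tail on the scale $\sqrt{nd_n}$ and (ii) survive the union bound of size $n^{k}$ while leaving $k_n$ below $n$. Everything else---the mean computation and the matching of rates back to $1/2-p_n\geq\sqrt{Cd_n^{-1}\log n}$---is straightforward bookkeeping once the concentration tool is in hand.
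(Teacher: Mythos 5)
Your route is genuinely different from the paper's, and the difference matters: the paper never proves a concentration inequality on the uniform regular model at all. Instead it plants an Erd\H{o}s--R\'enyi graph $H\sim\text{G}(n,d_n/n)$, applies McDiarmid there (where the measure is a product measure), and transfers the bound to $\text{G}_{n,d_n}$ by conditioning on regularity: $\p(\mathcal{A}_{n,d}^c)\leq \p(\mathcal{A}_{n,d/n}^c)/\p(H\text{ is }d_n\text{-regular})$, with the denominator bounded below by $\exp(-nd_n^{1/2+\epsilon/2})$ via Lemma~4.1.iv of \cite{kim}. The exponents in the corollary are dictated entirely by this transfer: the ER failure probability $\exp(-ckd_n)$ must beat $\exp(nd_n^{1/2+\epsilon/2})$, which is exactly what forces $k\geq n^{2/3}$ and $d_n=\omega(n^{2/3+\epsilon})$. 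Your attribution of these exponents to "what the available switching-chain concentration results require" is therefore a misreading of where the constraint comes from, and \cite{kim} is cited in the paper only for the counting lower bound on $\p(H\text{ is }d_n\text{-regular})$, not for any concentration statement on $\text{G}_{n,d_n}$.

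The gap in your argument is the concentration step itself. You invoke a sub-Gaussian tail $\p(|f(A)-\e f(A)|>t)\leq 2\exp(-ct^2/(nd_n))$ for switching-Lipschitz functions on the \emph{uniform} $d_n$-regular model with $d_n$ as large as $\Theta(n)$. This is not an off-the-shelf fact: $\text{G}_{n,d_n}$ is not a product measure, the configuration-model coupling that underlies the standard martingale/switching arguments is only contiguous with the uniform model for sparse $d_n$, and making bounded differences work on the switching graph in the dense regime requires the substantially heavier machinery of, e.g., size-biased couplings --- none of whose hypotheses you verify. A telling symptom is that your claimed tail would give the union bound for all $k\gg n\log n/d_n=o(n^{1/3-\epsilon}\log n)$, i.e.\ it would prove strong $\alpha_n$-consistency for $\alpha_n$ far below $n^{2/3}$ and render the hypothesis $d_n=\omega(n^{2/3+\epsilon})$ essentially unused; when an assumed lemma makes the stated hypotheses superfluous, that is usually a sign the lemma is being assumed in a strength that has not been earned. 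Your mean computation and the bookkeeping relating $\log(1/(4p_n(1-p_n)))$ to $(1/2-p_n)^2$ are fine; if you want to keep your structure, you must either prove the switching concentration inequality in this density regime or replace that step with the paper's transfer-and-pay trick.
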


The previous result raises the question of whether there will always be a fraction of misaligned node in a random regular graph. Although we do not have a definite answer, it is possible to  construct non-trivial examples of  sequences of $d$-regular graphs for which the MLE is not strongly consistent according to Definition \ref{def:consis}. In fact, consider $A_n$ as the $d_n$-ring lattice with $n$ vertices and any sequence of $(d_n)$. Then, according to parth a) of Corollary \ref{cor:SW}, the MLE is not strongly consistent for any $(P_n)_{n=n_0}^\infty$. Numerical simulations on the $d_n$-ring lattice (see Section \ref{sec:simulations}) suggest that for the $d_n$-ring lattice in particular, a large fraction of the vertices can be correctly aligned if $d_n$ is large enough.

\section{Experiments}
\label{sec:experiments}
In light of the theoretical results presented in the previous sections, we perform experiments to study the matchability of channel corrupted graphs. Using simulated and real data, we study the question of whether the vertices of a given graph are matchable to the vertices of a random graph generated after passing the original graph through a corrupting channel.

As observed in the previous sections, the feasibility of graph matching in the errorful channel model depends on the number of edge disagreements between the original graph and a shuffled version of it. 
Equation \eqref{eq:upperbound-p_n} provides an upper bound on the probability $p_n$ of the corrupting channel model that ensures that the MLE is a consistent estimator;
indeed, $\min_{k} p^\ast_{n,k}$ gives a measure of how much noise can a graph $A$ safely tolerate while keeping the feasibility of perfect graph de-anonymization in the limit. 
Similarly, using the result in Theorem~\ref{thm:asym2}, $\min_{k\geq k_0} p_{n,k}^\ast$ gives a (limiting) measure of how much noise can a graph $A$ safely tolerate while keeping the feasibility of de-anonymizing all but $o(k_0)$ of the vertices. 
The value of $p^\ast_{n,k}$ depends on $X_{k,\min}(A) =\min_{Q\in\Pi_{n,k}}\|A-Q^TAQ\|_F^2$, the smallest number of edge disagreements over all permutations of $k$ vertices.  Computing the exact value of $p^\ast_{n,k}$ is infeasible for large values of $k$, but using a sample of random permutations we can obtain an exact upper bound for $p^\ast_{n,k}$, given by
\begin{equation}
\hat{p}^\ast_{n,k}(A) = \frac{1}{2} - \frac{1}{2}\left(1-\exp\left(-\frac{6k\log n}{\hat{X}_{k,\min}(A)}\right)\right)^{1/2}, \label{eq:hat-upperbound-p_n}
\end{equation}
where $\hat{X}_{k,\min(A)}=\min_{i=1,\ldots,m}\|A-Q_i^TAQ_i\|_F^2$ for a set of random permutations $Q_1,\ldots,Q_m$, uniformly distributed on $\Pi_{n,k}$. Note that this bound $\hat{p}_{n,k}^\ast$ might be loose compared with the one in Equation~\ref{eq:upperbound-p_n}, as only one permutation that was not sampled can decrease $X_{k,\min}$ significantly. Nevertheless, as observed in Section~\ref{sec:consis},  in many random graph models (including edge-independent graphs), the distribution of $\|A_n-Q^TA_nQ\|_F^2$ is usually bounded away from zero.

In addition, motivated by the bounds in Theorems \ref{thm:asym} and \ref{thm:asym2}, we estimate the average value of $\|A-QAQ^T\|_F^2$ where $Q\sim \text{Unif}(\Pi_{n,k})$ by
counting the number of edge disagreements introduced in $A$ by a uniform shuffling of $k$ vertices, and calculate
\begin{equation}
\widehat{X}_k(A) = \frac{1}{m}\sum_{i=1}^m \frac{1}{2}\|A-Q_i^TAQ_i\|_F^2. \label{eq:X_hat}
\end{equation}
In both cases, we generate a sample of $m=1000$ permutations.
In practice, we have found that larger values of $\widehat{X}_k(A)$ indicate that graph matching (at least recovering all but $k$ labels) is more feasible. 
We use a normalized version of this quantity $\frac{1}{k\log n}\hat{X}_k(A)$ to compare this number between different permutation sizes and number of vertices, where the normalizing constant is motivated by Theorem \ref{thm:asym}. 

These two statistics, $\hat{p}_k^\ast(A)$ and $\hat{X}_k(A)$, are calculated for different values of $k$, 
and they are used  to compare matching feasibility in a variety of graphs generated from popular statistical models and real networks from different domains. In addition, we compare the results of these statistics with the  accuracy of  a matching algorithm for a given graph $A$ and a corrupted version of it. Specifically, for a given graph $A$ we generate a corrupted graph $B\sim\text{C}(A, p, P)$ using the channel model, and approximately solve the resulting GMP \eqref{eq:GMP} to estimate the MLE (as described below). We measure the number of incorrectly matched vertices $\|\hat{P}-P\|_F^2$ for different values of $p$, and compare these results with the feasibility measures previously obtained.

In practice, computing the MLE of the unshuffling permutation is computationally unfeasible in general since solving the GMP requires to perform an exhaustive search  over the discrete space of $n!$ permutations; provably polynomial time algorithms  are only known for certain models such as correlated Erd\H{os}-R\'enyi graphs \citep{barak2019nearly,ding2018efficient}. We obtain an approximate solution to the GMP  with the Fast Approximate Quadratic programming algorithm  \citep{VogConLyzPodKraHarFisVogPri2015,rel}, which uses the Frank-Wolfe methodology \citep{FraWol2016} to solve an indefinite relaxation of the graph matching objective function before projecting the relaxed solution to the feasible space of permutations.
Frank-Wolfe is a constrained gradient descent procedure, and the algorithm terminates in a permutation $\hat{P}^{\text{FAQ}}$ that is an estimated local minimum of the relaxed GMP objective function. The computational complexity of this method is $O(n^3)$, but because it is solving a non-convex optimization problem, there is no guarantee that the global minimum is achieved. When the indefinite relaxation problem is solved exactly, it recovers the correct permutation almost always in correlated Bernoulli graphs \citep{rel}.

\subsection{Simulations}
\label{sec:simulations}
In the first example, we generate graphs from two popular graph models and compare the effect of the parameters that control the structure of the network. First, we simulate Erd\H{o}s-Renyi graphs  $\text{G}(n,\alpha)$ with $n=500$ vertices and rate $\alpha$, changing the value of $\alpha\in\{0.1,0.2, 0.3, 0.4, 0.5\}$ to compare the effect of the average degree. 
We also generate graphs using the Watts-Strogatz small-world model $\text{WS}(n, d, \beta)$, which are very similar to the Newman-Watts model (see Corollary \ref{cor:SW}); these graphs are intended  to produce graphs with the \textit{small-world} property. 
The WS model is  initialized with a regular $\frac{d}{2}$-ring lattice like the NW model, and each edge is randomly rewired with probability $\beta$. As $\beta$ increases, the distribution of the WS model becomes more similar to an ER graph. 

For each graph, we compute the estimated probability bounds $\hat{p}^\ast_k(A)$ and the normalized average disagreements $\frac{1}{k\log n}\hat{X}(A)$ according to equations \eqref{eq:X_hat} and \eqref{eq:hat-upperbound-p_n}. We generate 35 random graphs according to each model and compute the average quantities.
These results are summarized on Figure \ref{fig:experiment-ER}. 
We can see that in the ER model channel probability bounds and the average number of disagreements increase  with  $\alpha$, which suggests that matching in the corrupting channel model becomes more feasible as the average degree of the graph increases, and verifies the results in Corollary \ref{cor:examples}. Graphs with a WS distribution increase their matchability measures as the rewiring probability increases in accordance to Corollary \ref{cor:SW}, and this suggests that graphs that have a more uniformly random structure are easier to match.  
Matching all the vertices correctly in the WS model is difficult when $\beta$ is small since the near degree regularity ensures that vertices are very similar to their neighbors, and hence switching a few of the vertices with their neighbors causes a small number of disagreements. 
This result agrees with Corollary \ref{cor:SW} for the related Newman-Watts small-world model. Nevertheless, Figure \ref{fig:experiment-WS} also shows that even for small $\beta$, permuting a large fraction of vertices also causes this probability bound to increase significantly, and by Theorem~\ref{thm:asym2} this suggests that it might be still feasible to match a large portion of the vertices correctly. 

\begin{figure}
	\centering
	\begin{subfigure}{0.48\textwidth}
		\includegraphics[width=\textwidth]{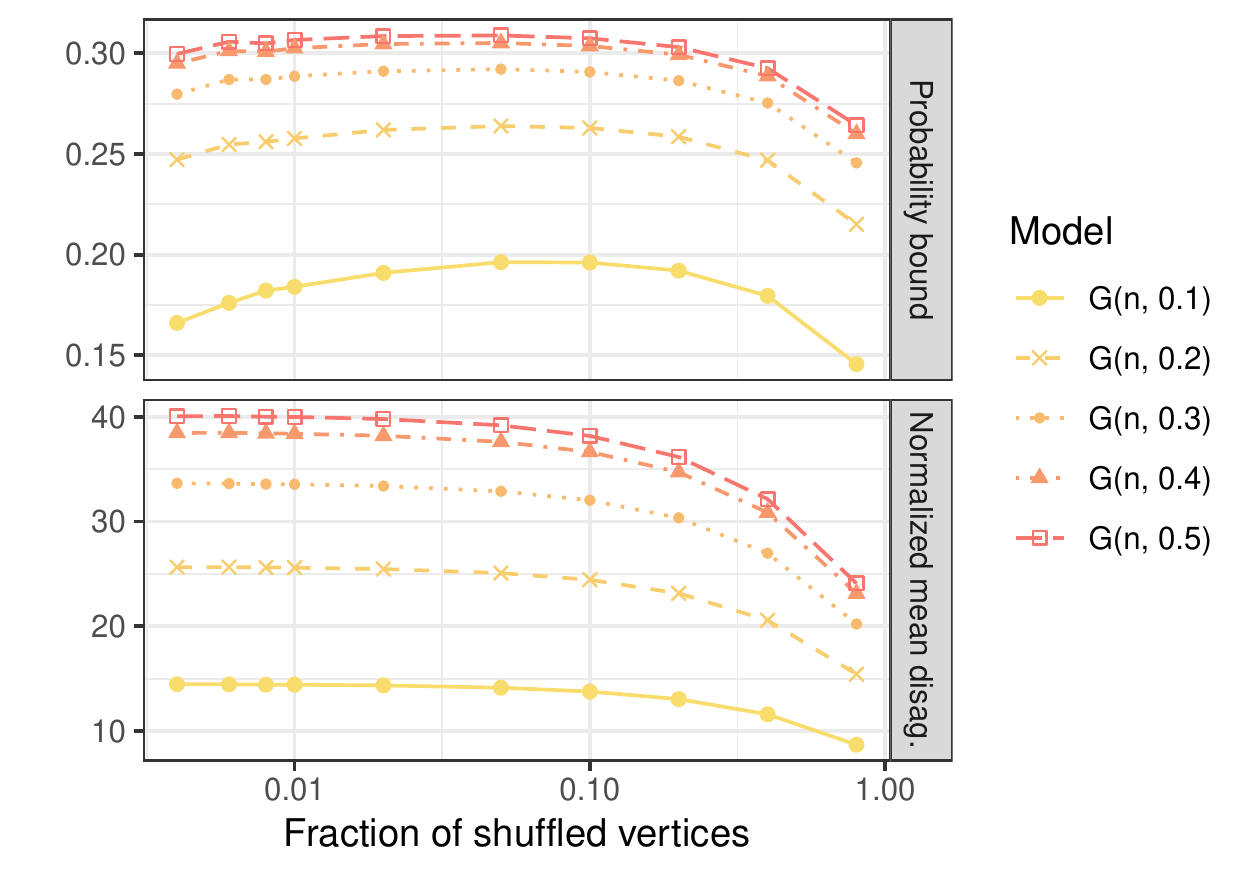}
		\caption{Erd\H{o}s-R\'enyi G($n,\alpha$)}
		\label{fig:experiment-ER}
	\end{subfigure}
	\begin{subfigure}{0.48\textwidth}
		\includegraphics[width=\textwidth]{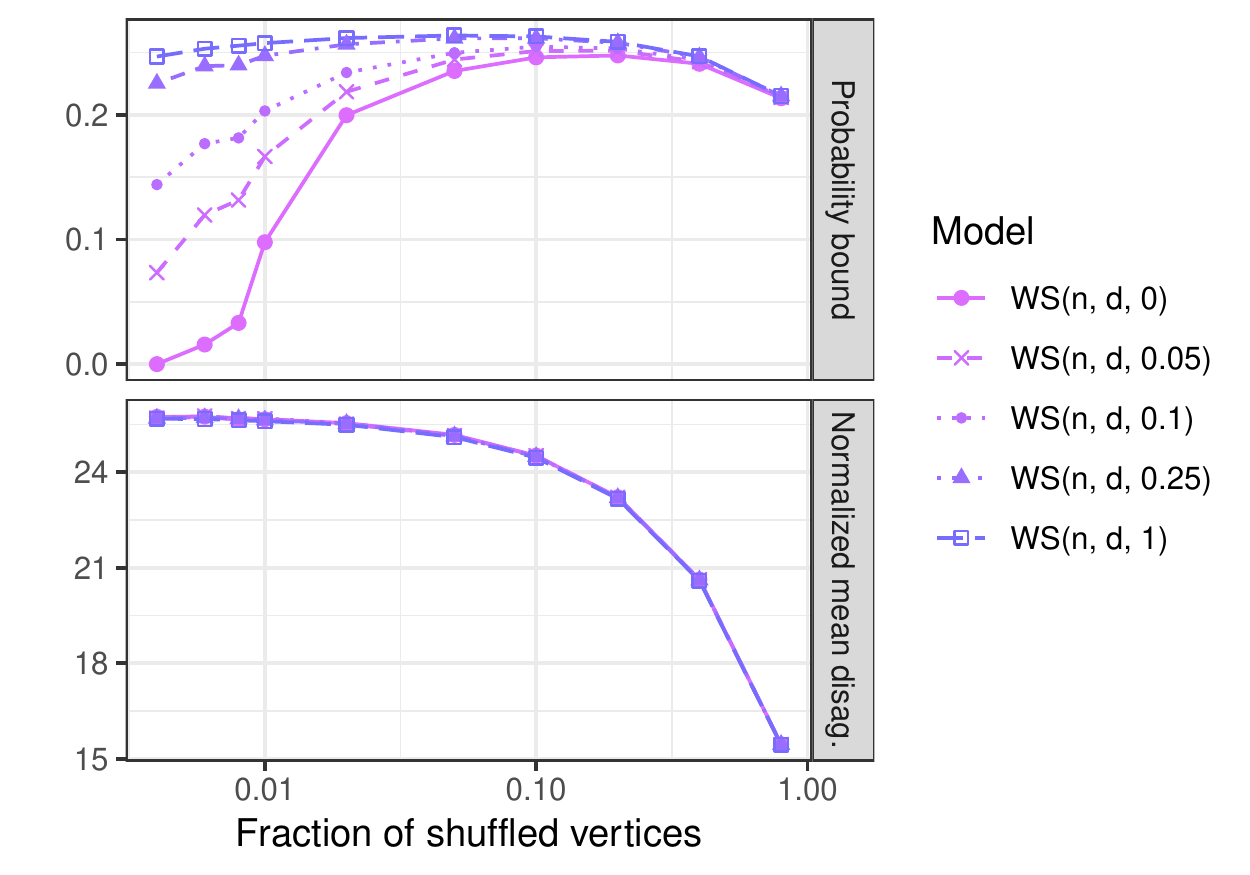}
		\caption{Watts-Strogatz WS($n,d,\beta$)}
		\label{fig:experiment-WS}
	\end{subfigure}
	\begin{subfigure}{0.8\textwidth}
		\centering
		\includegraphics[width=0.7\textwidth]{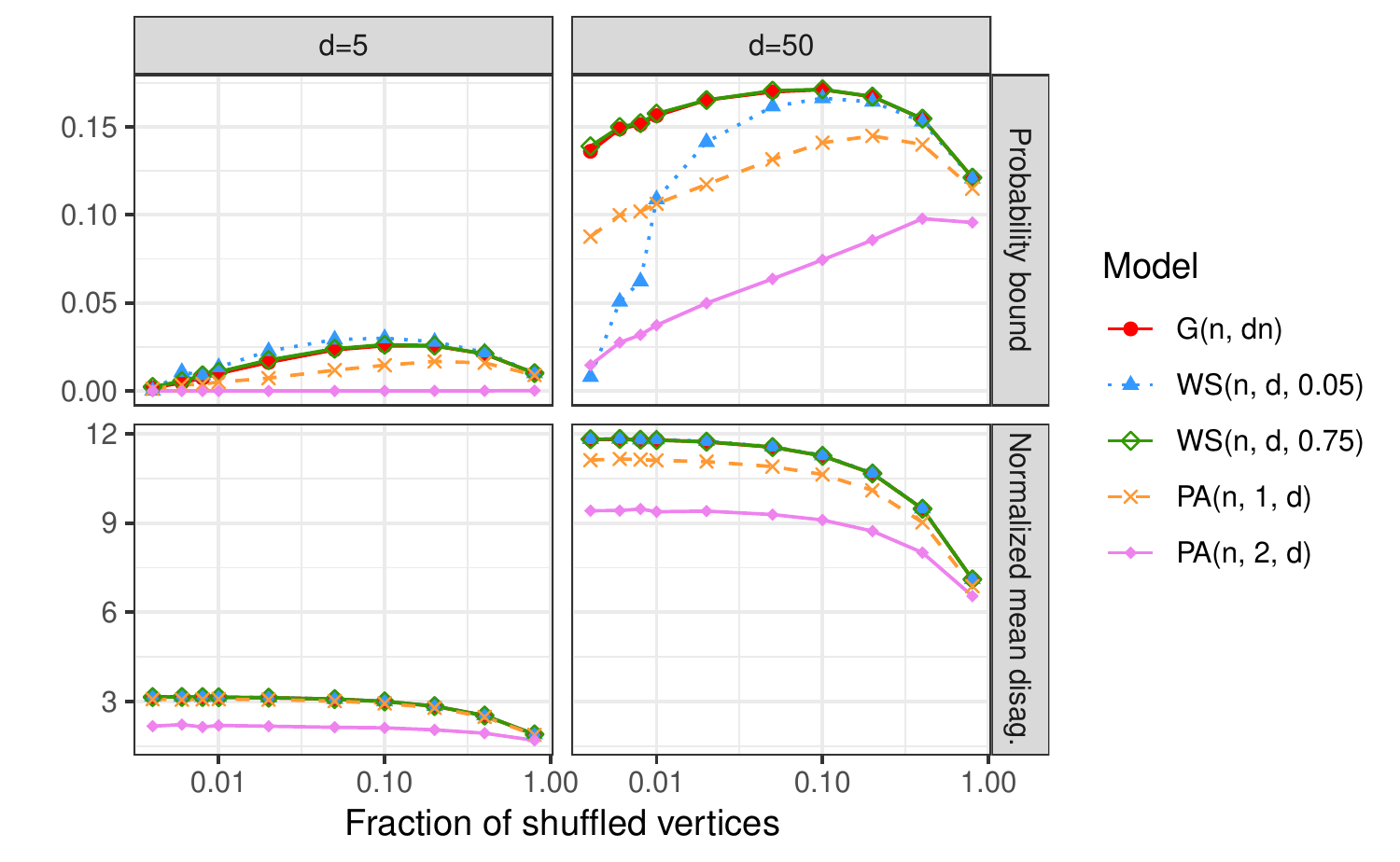}
		\caption{Comparison of different random graph models.}
		\label{fig:experiment-models}
	\end{subfigure}
	\caption{Measures of matching feasibility for different random graph models as a function of the fraction of randomly shuffled vertices $k/n$. The top left figure compares Erd\H{o}s-Renyi graphs G($n,\alpha$) with different edge probability $\alpha$. The top right figure shows the Watts-Strogatz model WS($n,d,\beta$) with different rewiring probabilities $\beta$. The bottom figure compares these two models and the preferential attachment model  with exponent $\gamma$ (PA($\gamma,d$)) for a fixed degree $d$. In each figure, the top panel shows an upper bound on the corrupting probability $p$ tolerated by the graph, and the bottom panel shows the normalized mean edge disagreements after shuffling $k$ random vertices. The plots suggest that graph matching is more feasible in denser and less structured graphs similar to ER.}
\end{figure}

In the second experiment, we compare the same statistics across different popular random graph models, fixing the expected average degree across the graphs. 
In particular, we use again the ER model G($n,m$) but now with $m$ fixed edges and the WS model. 
We additionally include the preferential attachment PA($n,\gamma$,d) model proposed by \citep{PA}.  This model creates a graph by a random process in which a new vertex with $d$ edges is added to the graph on each step $t$. 
For each new vertex the probability that it connects to an existing vertex $i$ is proportional to $d_{it}^{-\gamma}$, where $d_{it}$ is the degree of vertex $i$ at time $t$, and $\gamma\geq 0$ is a constant that controls the preferential effect. 
Larger values of $\gamma$ increase the preference of new vertices to connect with high degree vertices.

As observed in the previous experiment, the edge density of the graph plays an important role for matching feasibility. 
Thus, to make fair comparisons between the different models we adjust the model parameters to generate graphs with the same average degree $d$, by fixing  $d=5$ (1\% density) and $d=50$ (10\% density). 
For the WS model, we generate graphs from a $\text{WS}(n, d,0.05)$ and $\text{WS}(n, d,0.75)$, and for the PA model, we change the exponent $\gamma$ to generate $\text{PA}(n, 1,d)$ and $\text{PA}(n,2,d)$. 
In all cases, we use the default implementation of \texttt{igraph} \citep{csardi2006igraph} to simulate the graphs. 
The results of these experiments are shown in Figure \ref{fig:experiment-models}. We observe that in general the graphs that have a more random structure (the ER model and the WS model with a large rewiring probability) are the ones in which the measures of matching feasibility are larger. 
Matching in the PA model is complicated due to the low degree of many vertices, and thus the theoretical probability bound for perfect matchability is low.


\subsection{Real-world networks}
We also analyze graph matching in the corrupting channel model for real-world networks from different domains. 
The networks that we use  are the Zachary's karate club friendship network \citep{Zachary1977}, the graph of synapses between neurons of the C. elegans roundworm \citep{SW_WS}, the graph of hyperlinks between political blogs from the 2004 US election \citep{adamic05:_how}, a protein-protein interaction network in yeast \citep{von2002comparative}, and a citation network between arXiv papers in the condensed matter section \citep{newman2001structure}. 
Some graph statistics to summarize the data are included in Table \ref{table:summary-stats}. These include the number of nodes $n$, the average node degree $d=\frac{1}{n}\sum_{ij} A_{ij}$, the density of the graph $ d/(n-1)$, the clustering coefficient $C$ which counts the number of triangles in the graph divided over the maximum number of triangles possible, the skewness $\gamma_1$ and the relative standard deviation (RSD) of the degree distribution. In general, as observed in the simulations, we should expect that as the graphs become denser and with a more random structure (lower clustering coefficient and homogeneous degrees), matching becomes more feasible. 

\begin{table}[ht]
	\centering
	{\scriptsize
		\begin{tabular}{l|rrrrrr}
			\hline
			Network & $n$ & $d$ & Density & $C$ & $\gamma_1$ & RSD\\ 
			\hline
			Karate & 34 & 4.588 & 0.14 & 0.25 & 2.00 & 0.84 \\ 
			C. elegans & 297 & 15.79 & 0.05 & 0.181 & 3.50 & 0.88 \\ 
			Pol. blogs & 1,222 & 27.35 & 0.02 & 0.226 & 3.06 & 1.4 \\ 
			Prot. interaction & 2,617 & 9.06 & 0.003 & 0.47 & 3.96 & 1.65 \\ 
			arXiv & 16,726 & 6.69 & 0.0004 & 0.36 & 4.07 & 0.96 \\ 
			\hline
	\end{tabular}}
	\caption{Summary statistics of the network data: number of vertices ($n$), average vertex degree ($d$), density of the graph , clustering coefficient ($C$), skewness ($\gamma_1$) and relative standard deviation (RSD) of degree distribution . \label{table:summary-stats}}
\end{table}

Figure \ref{fig:exp-real-data} shows the bounds in the channel probability $\hat{p}_k(A)$ (top panel) and the normalized average number of edge disagreements $\hat{X}_k(A)$ (bottom panel) for the selected networks. 
When the fraction of shuffled vertices is small, all networks have a zero tolerance for noise in the corrupting channel model,  which can be due to the periphery nodes and the existence of (near) graph automorphisms between some of the vertices. 
Thus, exactly solving the graph matching problem in general is not feasible if $p>0$. However, as the fraction of shuffled vertices gets larger the value of $\hat{p}_k(A)$ increases for some of the networks, which suggests that partial matching is still possible. 
The political blogs and the C. elegans networks have the highest values for the measured quantities, $\hat{X}_k(A)$ and $\hat p_k(A)$, which can be explained by  their large average degree and  small clustering coefficient.
On the other hand, the protein-protein interaction and arXiv citation networks have the lowest values, possibly because of their low density. 
The probability bounds in the karate network might be very conservative since the results of Theorem \ref{thm:asym} are asymptotic, and $n$ is small for this network.

\begin{figure}
	\centering
	\includegraphics[width=0.7\textwidth]{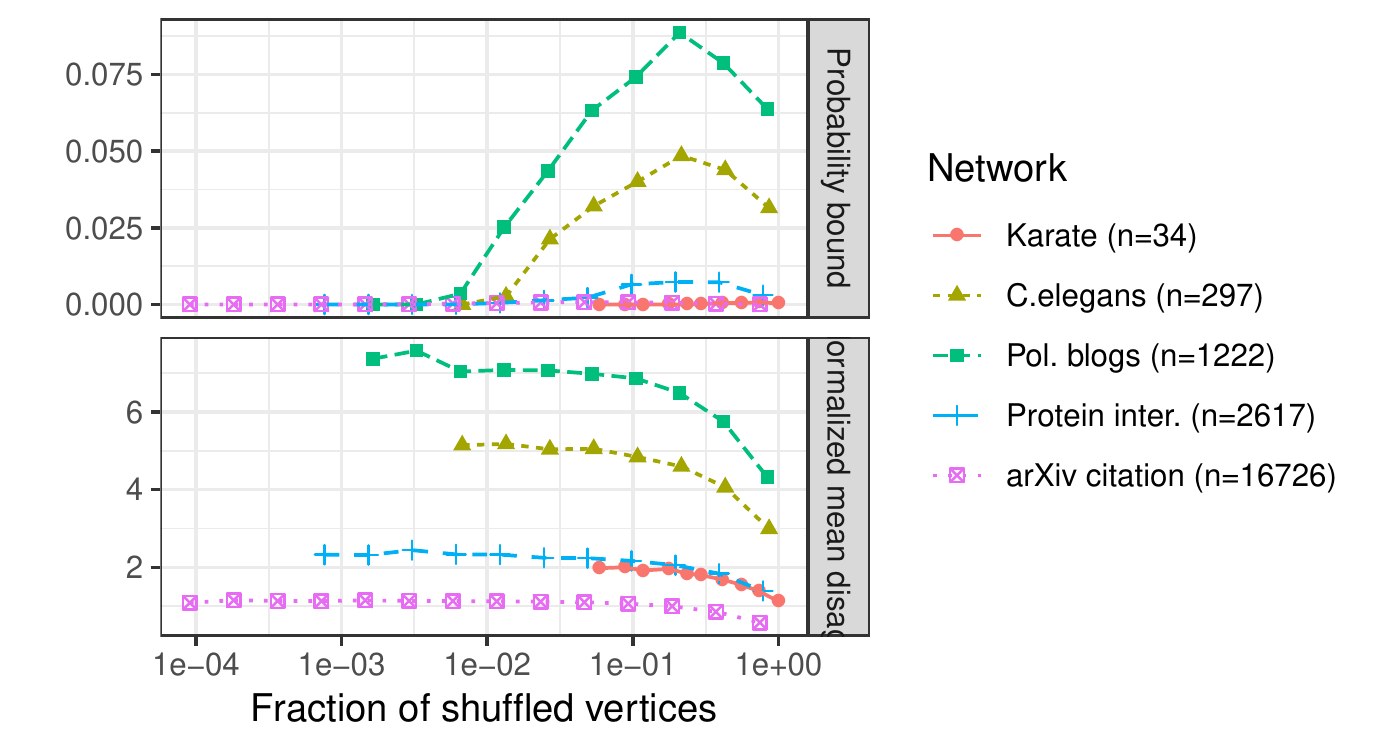}
	\caption{Measures of matching feasibility for different real-world networks as a function of the fraction of shuffled vertices $k/n$. These measures increase with a large number of shuffled vertices, suggesting that partial matching is possible.}
	\label{fig:exp-real-data}
\end{figure}

In practice, computing the MLE of the unshuffling permutation is computationally unfeasible, as solving the GMP requires to optimize a loss function over $\Pi_n$. 
Thus, 

To validate the results above, we measure the accuracy of a graph matching solution of the problem 
\eqref{eq:GMP} given graphs $A$ and $B$, where $B$ is generated using a uniformly corrupting channel $(p,I)$. Then, we perform graph matching using the FAQ programming algorithm ; we use the true parameter $I$ as the initialization value of the FAQ algorithm in order to check whether $I$ (the unshuffling permutation) is a local minimum of the matching objective function. 
While this is not finding a globally optimal solution to the graph matching problem in general, this strategy provides a useful surrogate for the difficulty/feasibility of the deanonymization task.

For each network $A$, we generate 30 independent random graphs $B$ from the uniformly corrupting channel with the same value of $p$,  and measure the average matching accuracy of the solution $\hat{P}^{\text{FAQ}}$. 
This process is repeated
for different values of $p\in\{10^{-3}, 10^{-2.5}, \ldots, 10^{-0.5}\}$. 
The accuracy of the solution is measured in two ways: 
First, we calculate the total matching accuracy as the percentage of vertices that are correctly matched; i.e., we compute $\frac{1}{n}\sum_{i=1}^n\textbf{1}(\hat{P}^{\text{FAQ}}_{ii}=1)$. 
As mentioned in Section \ref{sec:quasi}, matching  periphery vertices is usually hard in practice, thus, we also investigate whether it is possible to correctly match the core vertices by measuring the accuracy of matching the vertices with the highest degrees. 
If $j_1,\ldots, j_n$ is an ordering of the vertex indexes according to their degree, so that $\sum_{i=1}^nA_{j_ui} \geq \sum_{i=1}^nA_{j_vi}$ if $u<v$, then the accuracy of matching a fraction $c$ of the vertices with the highest degree is given by $\frac{1}{\lfloor cn\rfloor}\sum_{i=1}^{\lfloor cn\rfloor}\textbf{1}(\hat{P}^{\text{FAQ}}_{j_ij_i}=1)$, with $c\in(0,1]$. 




\begin{figure}
	\centering
	\begin{subfigure}[b]{0.48\textwidth}
		\includegraphics[width=\textwidth]{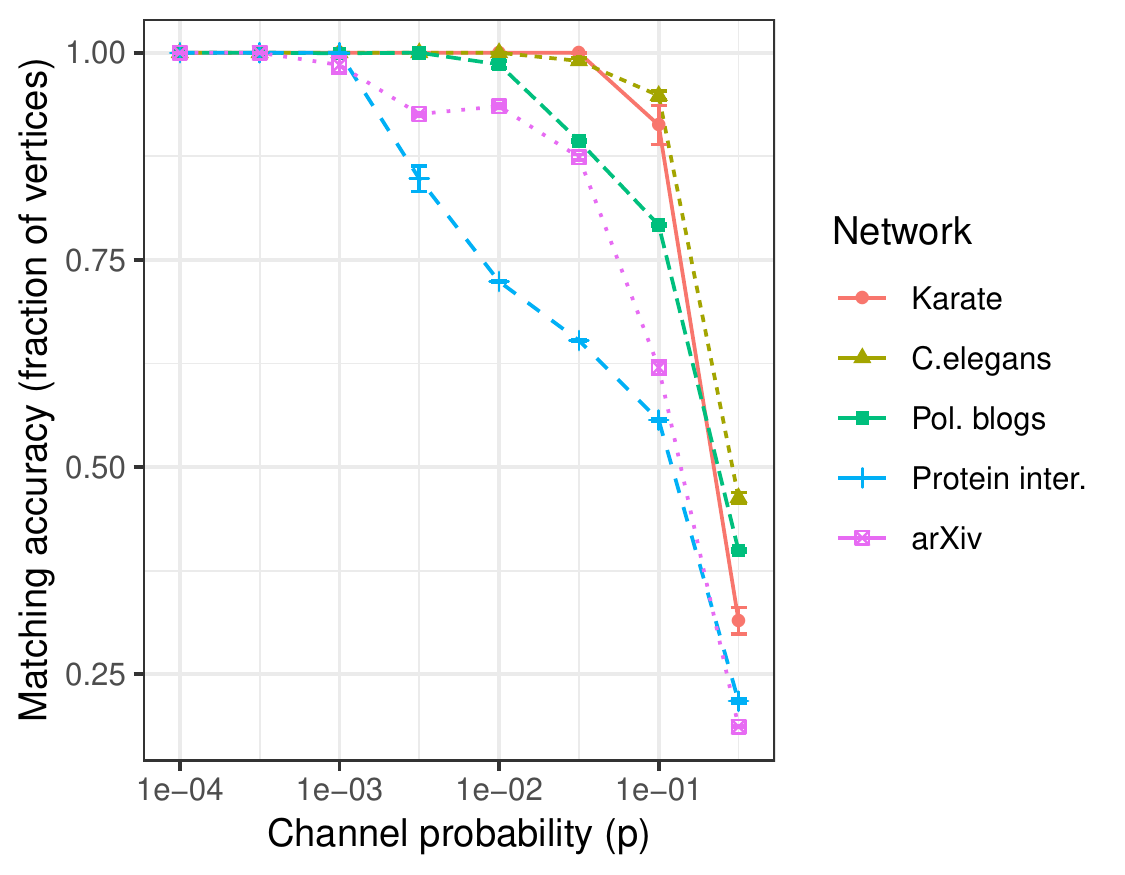}
		\caption{Overall matching accuracy}
		\label{fig:realdata-matchingaccuracy}
	\end{subfigure}
	\begin{subfigure}[b]{0.48\textwidth}
		\includegraphics[width=\textwidth,height=2.8in]{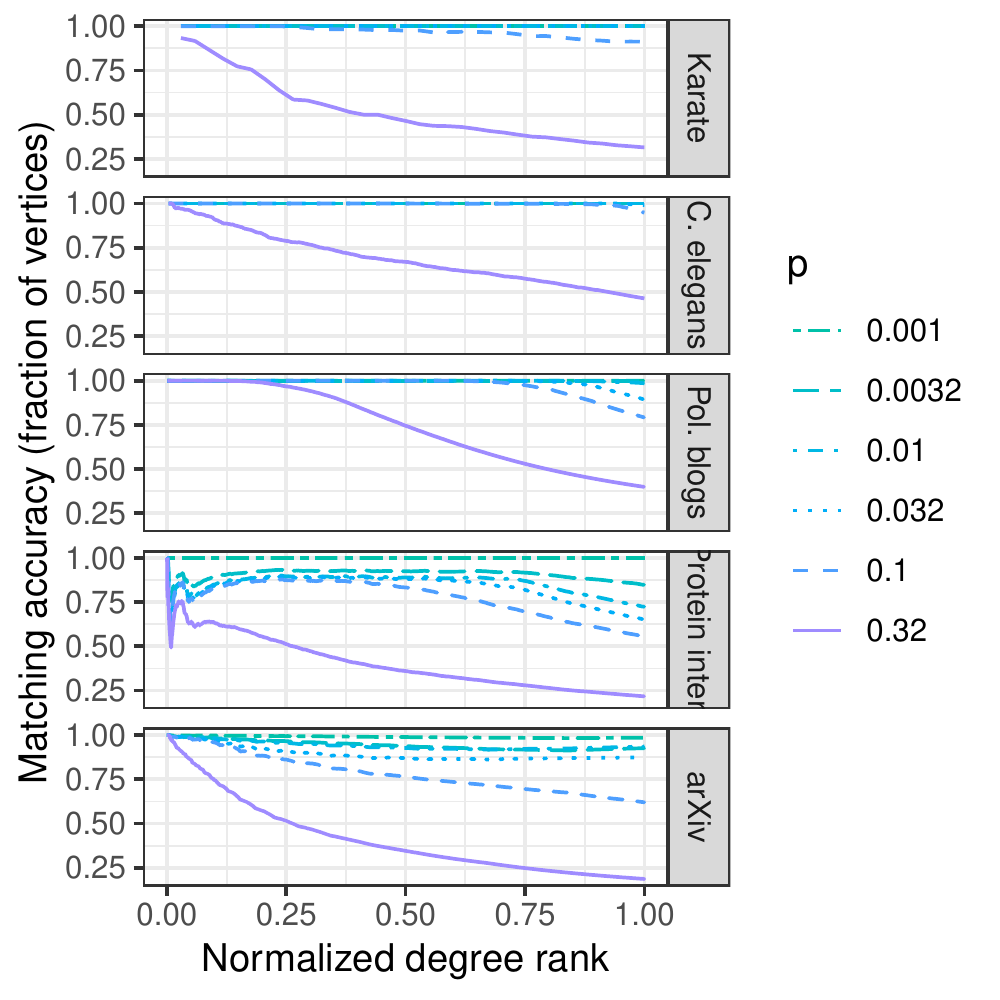}
		\caption{Matching accuracy by degree}
		\label{fig:realdata-highestdegree}
	\end{subfigure}
	\caption{Matching accuracy calculated as the fraction of vertices that are correctly matched. The left plot shows the accuracy of matching correctly all the vertices as a function of the channel probability. In the right plot, vertices are ranked from highest to lowest degree, and the accuracy is calculated considering only the vertices with a normalized degree rank smaller than $c\in(0,1]$. The plots show the resulting accuracy as a function of the fraction $c$ of vertices considered, and for different values of the corrupting channel probability $p$.}
\end{figure}

The overall matching accuracy and the matching accuracy of the vertices with highest degrees are shown in Figures \ref{fig:realdata-matchingaccuracy} and \ref{fig:realdata-highestdegree}. 
When looking to all the vertices aggregated in Figure \ref{fig:realdata-matchingaccuracy}, the accuracy in the arXiv and protein interaction networks decreases fast as $p$ increases, which agrees with the predicted matching feasibility of Figure \ref{fig:exp-real-data}.  The  Karate and C. elegans networks have the highest matching accuracy for most of the values of $p$. 
This suggests that the bound obtained in Theorem \ref{thm:asym} might be conservative for networks with a small number of vertices. However, when looking only to the vertices with the highest degree (Figure \ref{fig:realdata-highestdegree}), the matching accuracy is higher in the political blogs networks, in which almost 25\% of the vertices with the highest degree can be matched accurately even for large values of $p$; this is as expected by the results on Figure \ref{fig:exp-real-data} in which this network is the more resistant to noise for large values of $k$. 
Figure \ref{fig:realdata-highestdegree} also shows the difficulty of matching vertices in the protein-protein interaction and arXiv citation networks, in which even the vertices with the highest degrees are usually incorrectly matched. 
This is especially true for the protein-protein interaction network, which has a core-periphery structure in which high-degree nodes are highly connected between each other. As observed in Table \ref{table:summary-stats}, this graph is characterized by a large clustering coefficient and a heavy tailed degree distribution.

\section{Discussion}
\label{sec:discussion}
The inexact graph matching problem aims to find the alignment that minimizes the number of edge disagreements between a pair of graphs. 
In this paper, we have shown that this intuitive notion of matching coincides with a maximum likelihood estimator in errorfully observed graphs, which is formally defined using the corrupting channel model. 
This model is able to accommodate different correlation structures between the edges of a pair of graphs, and many other popular models for the paired and correlated networks are encompassed by this framework.  
Within this model, we derive necessary and sufficient conditions to determine whether the MLE is consistent, and introduce a relaxed notion of consistency in which all but a small fraction of the vertices are correctly matched. 

Since the distribution of the corrupting channel model conditions on a given graph $A$, the consistency results we presented here only depend on the structure of $A$ and the channel noise. This property allowed us to derive conditions that can be used to  check whether  the MLE is consistent for a given family of graphs, and hence whether it is feasible to solve the GMP. We used these results  to study matching feasibility  in some popular random graph models, unifying some previous matchability results within our framework and introducing some new ones as well. 
Our results were tested in simulated and real networks, and we introduced a statistic that can be used in practice to estimate matching feasibility.  
In addition, we believe that our results can be used to study the feasibility of solving the GMP in other graph models of interest.

This paper studies the information limits of the graph matching problem in the corrupting channel model, and currently there is no known efficient algorithm for finding the solution to the graph matching problem in this model framework. 
Hence, finding a polynomial-time algorithm to solve the GMP in this model and studying the corresponding computational limits are important open questions for future work. 
These questions have been partially addressed  in a number of settings (see for example \cite{JMLR:v15:lyzinski14a, shirani2017seeded, zhang2018unseeded,zhang2018consistent}),  but existing methods usually require seeds to initialize the method, or impose restrictive constraints in the type of graphs or edge correlations that can be handled. Our framework offers a new insight which can be potentially useful in this direction.

\vspace{3mm}
\noindent {\bf Acknowledgements:}
This material is based on research sponsored by the Air Force Research Laboratory and DARPA, under agreement number FA8750-18-2-0035. The U.S. Government is authorized to reproduce and distribute reprints for Governmental purposes notwithstanding any copyright notation thereon. The views and conclusions contained herein are those of the authors and should not be interpreted as necessarily representing the official policies or endorsements, either expressed or implied, of the Air Force Research Laboratory and DARPA, or the U.S. Government.

\appendix
\section{Proof of main results}

\subsection{Proof of Theorems \ref{thm:asym} and \ref{thm:asym-het}}
\label{sec:pfasym}
Note that we will prove the more general Theorem \ref{thm:asym-het} here, with the proof of Theorem \ref{thm:asym} then following immediately as a special case.
\begin{proof}[Proof of Theorem
\ref{thm:asym-het} part i.]
	We will first prove part i.\@ of the theorem. We show that under the conditions of the theorem, with high probability we have
	\begin{equation}
	\|A_n-P_n^TB_nP_n\|_F^2 \leq \|A_n-Q_n^TP_n^TB_nP_nQ_n\|_F^2, \label{eq:proof-edgediff}    
	\end{equation}
		for any $Q_n\neq I_n$, where $I_n$ is the identity matrix. The proof proceeds by expressing the difference between the left and the right hand side of Equation~\eqref{eq:proof-edgediff} can be expressed as a sum of independent random variables, and the tail of its distribution can be sufficiently bounded to show that Equation~\eqref{eq:proof-edgediff} holds.
		
	Observe that $P_n^TB_nP_n\sim C(A_n, \Psi_n^{(1)}, \Psi_n^{(2)}, I_n)$, so to this end, for a graph $A\in\mathcal{G}_n$, define $\widetilde{B}\sim C(A,\Psi_n^{(1)}, \Psi_n^{(2)},I_n)$ to be the channel corrupted $A$ with parameters $(\Psi_n^{(1)}, \Psi_n^{(2)},I_n)$.
	For each $Q\in\Pi_n$, define
	\begin{align*}
	X_Q(A)&=X_Q:=\frac{1}{2}\|A-Q^TAQ\|^2_F\\
	\widehat{X}_Q(A)&=\widehat{X}_Q:=\frac{1}{2}(\|A-Q^T\widetilde{B}Q\|^2_F-\|A-\widetilde{B}\|^2_F).
	\end{align*}
	Consider a fixed $Q^T\in\Pi_{n,k}$, and let the permutation associated with $Q^T$ be denoted via $\sigma$.
	Note that for any matrix $M\in\mathbb{R}^{n\times n}$,
	$[Q^TMQ]_{ij}=M_{\sigma(i),\sigma(j)}$. Recall that $V$ is the set of all vertices, which is isomorphic to $[n]$, and denote by $\binom{V}{2}=\{(i,j)\in[n]\times [n], i<j\}$ to the set of vertex pairs. Letting 
	\begin{align*}
	\Delta_1&:=\left|\left\{\,\,\{i,j\}\in\binom{V}{2}\text{ s.t. }A_{i,j}\neq A_{\sigma(i),\sigma(j)},\,A_{i,j}= \widetilde{B}_{\sigma(i),\sigma(j)}   \right\}\right|\\
	\Delta_2&:=\left|\left\{\,\,\{i,j\}\in\binom{V}{2}\text{ s.t. }A_{i,j}= A_{\sigma(i),\sigma(j)},\,A_{i,j}\neq \widetilde{B}_{\sigma(i),\sigma(j)}   \right\}\right|,
	\end{align*}
	we have that 
	\begin{align*}
	\frac{1}{2}\|A-Q^T\widetilde{B}Q\|^2_F&=X_Q+\Delta_2-\Delta_1,\\ \frac{1}{2}\|A-\widetilde{B}\|^2_F&=\frac{1}{2}\|Q^TAQ-Q^T\widetilde{B}Q\|^2_F=\Delta_2+\Delta_1.
	\end{align*}
	And hence
	$\widehat{X}_Q=X_Q-2\Delta_1$. Let $\tilde\Delta_1\sim\text{Binom}(X_Q,\tilde{p}_n)$ be an independent binomial random variable.
	Noting that $\Delta_1\sim\sum_{i=1}^{X_Q}	\text{Ber}(q_i)$ is a sum of independent Bernoulli random variables, with parameters $q_i \leq \tilde{p}_n$ (each $q_i$ is equal to its corresponding corrupting probability $\Psi^{(k)}_{i'j'}$, which is upper bounded by $\tilde{p}$), we have that $\p(\Delta_1\geq \tau) \leq \p(\tilde\Delta_1\geq\tau)$ for all $\tau\geq 0$, and hence
	\begin{align}
	\label{eq:XQk}
	\p(\widehat{X}_Q\leq 0)&=\p( \Delta_1\geq X_Q/2) \leq \text{exp}\left\{-X_Q H(1/2,\tilde{p}_n)  \right\}\notag\\
	&=\text{exp}\left\{\frac{1}{2}X_Q \log(4\tilde p_n(1-\tilde p_n))\right\},
	\end{align}
	where $H(1/2,p)$ is the relative binomial entropy \cite[Theorem 1]{arratia1989tutorial}. 
	If $n\geq n_1$, then the Assumption in Eq. \ref{eq:growthrate} implies  
	\begin{align}
	\label{eq:XQkk}
	    \p(\widehat{X}_Q\leq 0)\leq \text{exp}\left\{-3k\log(n) \right\},
	\end{align}
	and this bound holds uniformly for all $Q^T\in\Pi_{n,k}$.
	
	Consider now $n\geq n_1$.
	Let $B_n\sim C(A_n,\Psi^{(1)}_n, \Psi^{(2)}_n,P_n)$ and $\widetilde{B}_n\sim C(A_n,\Psi^{(1)}_n, \Psi^{(2)}_n,I_n)$.
For each $Q\in\Pi_n$, let 
	\begin{align*}
	\widehat{Y}_Q(A)&=\widehat{Y}_Q:=\frac{1}{2}(\|A_n-Q^TB_nQ\|^2_F-\|A_n-P_n^TB_nP_n\|^2_F)\\
	\widehat{X}_Q(A)&=\widehat{X}_Q:=\frac{1}{2}(\|A_n-Q^T\widetilde{B}_nQ\|^2_F-\|A_n-\widetilde{B}_n\|^2_F).
	\end{align*}
	Note that $\widehat{X}_{P_n^TQ}=\widehat{Y}_{Q}$, and hence,
	\begin{align*}
	\p(\exists \,Q\neq P_n\text{ s.t. }\widehat{Y}_Q\leq 0)&=
	\p(\exists \,P_n^T Q\neq I\text{ s.t. }\widehat{X}_{P_n^T Q}\leq 0)\\
	& =\p(\exists \,\tilde Q\neq I\text{ s.t. }\widehat{X}_{\tilde Q}	\leq 0)
	\end{align*}
	For each $k>1$, note that $|\Pi_{n,k}|\leq n^k$.
	For each $k$, applying Eq. \eqref{eq:XQkk} uniformly for each $\tilde{Q}\in\Pi_{n,k}$ and summing over $k$ yields
	\begin{align}
	\p(\exists \,\tilde Q\neq I\text{ s.t. }\widehat{X}_{\tilde Q}	\leq 0)\leq\sum_{k= 2}^n \text{exp}\left\{-2k\log n  \right\} \leq\text{exp}\left\{-4\log n+\log n  \right\}=n^{-3},\label{eq:proof-asymp-1-to-2a}
	\end{align}
and therefore
\begin{align}
\p(\exists \,Q\neq P_n\text{ s.t. }\widehat{Y}_Q\leq 0)&=
\p\left( \argmin_{P\in\Pi_n}\|A_n-P^TB_nP\|_F^2\neq\{P_n\}\right) \label{eq:proof-asymp-1-to-2b}\\
&=\p(\exists\, \text{ an MLE, }\hpi,\text{ of }P_n\text{ s.t. }\hpi\neq P_n)\leq n^{-3}.\nonumber
\end{align}
	Therefore, with probability $1$ it holds that for all but finitely many $n$ (by the Borel-Cantelli lemma as $n^{-3}$ is finitely summable) the unique MLE of $P_n$ is $\hpi=P_n$, and any sequence of MLE's, $(\hpi)_{n=n_0}^{\infty}$, is strongly consistent.
\vspace{ 2mm}

\noindent\emph{Proof of Theorem \ref{thm:asym-het} part ii.} 
Note that condition Eq. \ref{eq:gr22} implies
\begin{align}
		\label{eq:gr2}
		&\max_{Q_n\in S_{n}}\log\left(\frac{1}{4\breve{p}_n(1-\breve{p}_n)}\right)\left(\|A_n -Q_n^T A_nQ_n\|^2_F\right)=o\left(\log(n) \right)
\end{align}
As in the proof of part i., we will work with $P_n=I_n$, as the consistency results for the MLE estimating $I_n$ transfer immediately to consistency results for the MLE estimating a more general $P_n$\
Consider the notation as in the proof of part i, and consider the $\Theta(n)$ disjoint $Q_n$ permutations with ${Q_n}\in\cup_{i=2}^{k_n}\Pi_{n,i}\setminus{I_n}$
(where $S_{n}$ be the set of these $Q_n$) satisfying Eq.  (\ref{eq:gr22}), (\ref{eq:gr23}) as posited in Theorem~\ref{thm:asym-het}.
Let $Z_n=\sum_{Q_n\in S_{n,k}} \mathds{1}\{\widehat{X}_Q\leq 0\}$.

Let $Q\in S_{n}$ be any of the permutations as defined above. We have that
$\Delta_1\sim\sum_{i=1}^{X_Q}\text{Ber}(q_i)$ for some independent Bernoulli random variables with parameters $\breve{p}_n\leq q_i\leq \tilde{p}_n$, and let
$\breve\Delta_1\sim\text{Binom}(X_Q,\breve{p}_n)$.
We will make use of the following Theorem from \cite{zhang2018non} in our proof.
\begin{theorem}[Theorem 9 from \cite{zhang2018non}]
Suppose that $X$ is centralized binomial distributed
with parameters $(k,p)$. For any $\beta > 1$, there exist constants $c_\beta$, $C_\beta > 0$ that only rely on $\beta$, such
that
\begin{align}
\label{eq:az1}
    \p(X \geq x)\begin{cases}
    \geq c_\beta\text{exp}(-C_\beta k h_p(p+\frac{x}{k})),&\text{ if }0\leq x\leq \frac{k(1-p)}{\beta}\text{ and }x+kp\geq 1;\\
    =1-(1-p)^k,&\text{ if }0\leq kp+x<1
    \end{cases}\\
    \label{eq:az2}
        \p(X \leq -x)\begin{cases}
    \geq c_\beta\text{exp}(-C_\beta k h_p(p-\frac{x}{k})),&\text{ if }0\leq x\leq \frac{kp}{\beta}\text{ and }x+k(1-p)\geq 1;\\
    =1-p^k,&\text{ if }0\leq k(1-p)+x<1
    \end{cases}
\end{align}
where $h_u(v)=v\log(v/u)+(1-v)\log(\frac{1-v}{1-u})$.
\end{theorem}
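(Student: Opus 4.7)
The plan is to prove the upper-tail bound $\p(X\geq x)$; the lower-tail statement on $\p(X\leq -x)$ then follows by the symmetry $Y\mapsto k-Y$, which sends $\text{Binom}(k,p)$ to $\text{Binom}(k,1-p)$, together with the identity $h_p(p-x/k) = h_{1-p}((1-p)+x/k)$. The trivial case $0\leq kp+x<1$ is immediate: writing $X=Y-kp$ with $Y\sim\text{Binom}(k,p)$, the fact that $Y\in\mathbb{Z}_{\geq 0}$ forces $\{Y\geq kp+x\}$ to coincide with $\{Y\geq 1\}$, giving $\p(Y\geq kp+x)=1-(1-p)^k$.

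For the main case $0\leq x\leq k(1-p)/\beta$ with $x+kp\geq 1$, set $m:=\lceil kp+x\rceil$ and $q:=m/k$. The hypothesis $x\leq k(1-p)/\beta$ gives $q\leq p+(1-p)/\beta+1/k$, so $1-q$ is bounded below by a positive constant depending only on $\beta$; the hypothesis $x+kp\geq 1$ gives $m\geq 1$, i.e.\ $q\geq 1/k$. Stirling's formula with explicit non-asymptotic error bounds then yields the classical single-point estimate
\begin{equation*}
\p(Y=m)\;=\;\binom{k}{m}p^m(1-p)^{k-m}\;\geq\;\frac{c_0}{\sqrt{k\,q(1-q)}}\exp\bigl(-k\,h_p(q)\bigr),
\end{equation*}
where $c_0>0$ is an absolute constant.

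To pass from this single-point bound to a bound on $\p(Y\geq m)$, I split into two regimes indexed by an absolute threshold $t_0>0$. When $k\,h_p(q)\geq t_0$, the polynomial prefactor $1/\sqrt{k\,q(1-q)}$ is absorbed into the exponential by choosing $C_\beta$ large enough that $(C_\beta-1)k\,h_p(q)\geq \tfrac12\log(k\,q(1-q))+\log(c_0/c_\beta)$ throughout the allowed range; this is possible because $1-q\geq c(\beta)>0$. When $k\,h_p(q)<t_0$, one has $|q-p|=O(1/\sqrt{k})$, so $m$ lies within $O(1)$ standard deviations of $\mathbb{E}Y=kp$; a non-asymptotic Berry--Esseen bound then gives $\p(Y\geq m)\geq c'>0$, and since $c_\beta\exp(-C_\beta k\,h_p(q))\leq c_\beta$ in this regime, the target inequality holds provided $c_\beta\leq c'$.

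The main obstacle will be uniform bookkeeping of constants across the allowed $(p,x,k)$, particularly the discreteness round-off in $m=\lceil kp+x\rceil$ and the degenerate regimes $p\to 0$ or $p\to 1$. The condition $x+kp\geq 1$ is precisely what makes $m\geq 1$ so that Stirling applies to $m!$, and the condition $x\leq k(1-p)/\beta$ is precisely what forces $k-m=\Omega_\beta(k)$ so that Stirling applies to $(k-m)!$. The Berry--Esseen step in the small-deviation regime requires $kp(1-p)$ to be bounded below, which must be verified separately when $p$ is near the boundary (for instance via a direct Poisson comparison when $kp=O(1)$). No genuinely new probabilistic ideas are needed; the result is essentially a careful packaging of Stirling's approximation together with a local central limit theorem for the binomial.
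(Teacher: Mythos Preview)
The paper does not prove this statement at all: it is quoted verbatim as ``Theorem 9 from \cite{zhang2018non}'' and invoked as a black-box tool inside the proof of Theorem~\ref{thm:asym-het}, part~ii. There is therefore no ``paper's own proof'' to compare your proposal against.

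That said, your sketch is the standard route to results of this type and is essentially the argument given in the cited reference: a single-term lower bound on $\binom{k}{m}p^m(1-p)^{k-m}$ via Stirling with explicit error control, followed by a case split between the large-deviation regime (where the polynomial prefactor is absorbed into the exponential at the cost of enlarging $C_\beta$) and the bulk regime (handled by a Berry--Esseen or direct argument). Your identification of the roles of the two hypotheses---$x+kp\geq 1$ ensuring $m\geq 1$, and $x\leq k(1-p)/\beta$ ensuring $k-m=\Omega_\beta(k)$---is correct, and you are right to flag the boundary case $kp=O(1)$ as needing separate treatment. The one place to be careful is the small-deviation regime when $kp(1-p)$ is not bounded below: Berry--Esseen as stated does not apply there, and you will need a direct argument (e.g.\ the Poisson comparison you mention, or simply the bound $\p(Y\geq 1)\geq 1-(1-p)^k\geq c$ when $kp\geq 1$). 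None of this affects the present paper, which only \emph{uses} the result.
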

\noindent In the notation of the above theorem, let $X=\breve\Delta_1-X_Q\breve p_n$ (so that $k=X_Q$ and $p=\breve p_n$) and $x=X_Q(1/2-p_n)$.
We have that
$$x=X_Q(1/2-\breve p_n)\leq X_Q(1-\breve p_n)/2,
$$
and we can take $\beta=2$.
Applying Eq. \eqref{eq:az1} and the fact that  $\p(\breve{\Delta}_1\geq \tau)\leq \p(\Delta_1\geq \tau)$ for all $\tau\geq 0$, then implies (where $c$ and $C$ are constants independent of $n$)
\begin{align}
p_Q:=\p(\Delta_1\geq X_Q/2 )&\geq \p(\breve\Delta_1-X_Q\breve p_n\geq X_Q(1/2-\breve p_n))\nonumber\\   
&\geq c \text{ exp}\left\{-C X_Q \frac{1}{2}\log\left(\frac{1}{4\breve{p}_n(1-\breve p_n)} \right)\right\}\nonumber\\
&=\Omega\left( c \text{ exp}\left( -\frac{1}{2}\log(n) \right)\right) \label{eq:proof-lowboundpq}
\end{align}
where the last line follows from Eq. \eqref{eq:gr2}.

Consider now disjoint $Q=Q_n$ and $Q'=Q_n'$ in $S_{n,k_n}$. 
Each $\widehat X_Q=\sum_{i,j}A_{i,j}(B_{i,j}-B_{\sigma_Q(i),\sigma_Q(j)})$ is a function of $\Theta(nk_n)$ independent Bernoulli random variables (the $B_{i,j}$ that are permuted).
The disjointedness of 
$Q$ and $Q'$ implies that there are at most $2k^2$ common $B_{i,j}$ that appear in both $\widehat X_Q$ and $\widehat X_{Q'}$,
and we can decompose 
$\widehat X_{Q'}$ into 
$$\widehat X_{Q'}=Y+W,$$ where $Y$ is independent of $\widehat X_Q$ and $|W|\leq 2k_n^2$.
Note that $k_n=\Theta(1)$ by construction.
We then have that (where $\Delta_1'$ is the $\Delta_1$ associated with $\widehat X_{Q'}$)
\begin{align}
\operatorname{Cov}\left( \mathds{1}\{\widehat X_Q\leq 0\}, \mathds{1}\{\widehat X_{Q'}\leq 0\}\right) & =\p(\widehat X_Q\leq 0,\widehat X_{Q'}\leq 0)-\p(\widehat X_Q\leq 0)\p(\widehat X_{Q'}\leq 0) \nonumber\\
&\leq \p(Y\leq 2k_n^2,\widehat X_{Q'}\leq 0)-\p(Y\leq -2k^2)\p(\widehat X_{Q}\leq 0)\nonumber\\
&=\p(\widehat X_{Q}\leq 0)\p(-2k_n^2\leq Y\leq 2k_n^2)\nonumber\\
&\leq \p(\widehat X_{Q}\leq 0)\p(-4k_n^2\leq \widehat X_{Q'}\leq 4k_n^2)\nonumber\\
&=p_Q \p(X_{Q'}/2-2k_n^2\leq \Delta_1' \leq X_{Q'}/2+2k_n^2)\label{eq:proof-covbound}
\end{align}
Observe that conditions~\eqref{eq:gr22-het} and \eqref{eq:gr23-het} imply that  that $X_Q=\omega(1)$ and that $(1/2 -\breve{p}_n)^2=o(1)$. The variable $\Delta_1'=\sum_{i=1}^{X_{Q'}}\text{Ber}(q_i)$ is a sum of $X_{Q'}$ independent binary variables with $\breve{p}_n\leq q_i\leq \tilde{p}_n$, and therefore, by the Lindeberg-Feller's Central Limit Theorem, $s_{n}^{-1}(\Delta_1'-\mathbb{E}[\Delta_1'])$ converges in distribution to a standard normal, with
$$X_{Q'}\breve{p}_n\leq \mathbb{E}[\Delta_1']\leq X_{Q'}\tilde{p}_n,$$
$$X_{Q'}\breve{p}_n(1-\breve{p}_n) \leq s_n^2 \leq X_{Q'}\tilde{p}_n(1-\tilde{p}_n).$$
Therefore, setting $H_n:=\sqrt{X_{Q'}}\left(\frac{1/2 - \mathbb{E}[\Delta_1']/X_{Q'}}{s_n/\sqrt{X_{Q'}}} \right)$, we have that for $n$ sufficiently large
\begin{align*}
    \p\left(\Delta_1'-\frac{X_{Q'}}{2} \in\left[-2k_n^2,2k_n^2\right] \right) & = \p\left( \frac{\Delta_1' - \mathbb{E}[\Delta_1']}{s_n}\in\left[H_n -\frac{2k_n^2}{s_n}, H_n  +\frac{2k_n^2}{s_n}\right]\right)\\
    &\leq C_1\frac{k_n^2}{s_n}\leq 
    \frac{C_2}{\sqrt{X_{Q'}}}.
\end{align*}
for some constants $C_1, C_2$. Combining this bound with Equation~\eqref{eq:proof-covbound}, we have
$$\operatorname{Cov}\left( \mathds{1}\{\widehat X_Q\leq 0\}, \mathds{1}\{\widehat X_{Q'}\leq 0\}\right)\leq C_2 p_Q \frac{1}{\sqrt{X_{Q'}}}  $$ 
	We now have that 
	$$\text{Var}(Z_n)\leq \e(Z_n)+\sum_{Q_n\in S_{n,k}}\sum_{Q'_n\neq Q_n}p_{Q_n}\frac{C}{\sqrt{X_{Q_n'}}}.$$
	The second moment method (see \cite{alon} Theorem 4.3.1) can be applied to show 
	\begin{align*}
	\p(\exists\,Q_n\in\Pi_{n,k_n}\text{ s.t. }\widehat{X}_{Q_n}\leq 0)&=\p(Z_n>0) \geq 1-\frac{\text{Var}(Z_n)}{(\e Z_n)^2 }\\
	&\geq1-\frac{\e Z_n }{(\e Z_n)^2}-\frac{\sum_{Q_n\in S_{n,k}}\sum_{Q'_n\neq Q_n}p_{Q_n}\frac{C}{\sqrt{X_{Q_n'}}}}{\sum_{Q_n}\sum_{Q'_n}p_{Q_n}p_{Q_n'}}.
	\end{align*}
By the assumption in Eq. \eqref{eq:proof-lowboundpq}, we have that
	\begin{align*}
	\e(Z_n)=\Omega(ne^{-\frac{1}{2}\log n})=\omega(1),
	\end{align*}
By the assumption in Eq.\eqref{eq:gr23-het}, we have that $\breve{p}_n\rightarrow\frac{1}{2}$ (otherwise, the left hand side of Eq. \eqref{eq:gr23-het} is at least of a constant order). Combining this fact with $\breve{p}_n\leq \tilde{p}_n$ and Eq.
\eqref{eq:gr22-het}, we have that $\min_{Q_n\in S_n}\|A_n-Q_n^TA_nQ_n\|_F^2=\omega(1)$.
Combining these facts, we have that for each $Q\in S_{n}$,
\begin{align*}
-X_Q \log\left(\frac{1}{4\breve p_n(1-\breve p_n)}\right)+\log(X_Q)& = \log(X_Q)\left(1 - \frac{X_Q}{\log X_Q} \log\left(\frac{1}{4\breve p_n(1-\breve p_n)}\right)\right) \\
& \geq \log(X_Q)\left(1 - \max_{Q\in S_n}\frac{X_Q}{\log X_Q} \log\left(\frac{1}{4\breve p_n(1-\breve p_n)}\right)\right)\\
& \geq \min_{Q\in S_n}\log(X_Q)\left(1 - o(1)\right)\\
& =\omega(1).   
\end{align*}
Let 
$$m_n:=\min_{Q\in S_{n}}-X_Q \log\left(\frac{1}{4\breve p_n(1-\breve p_n)}\right)+\log(X_Q),$$
and note that $m_n=\omega(1)$.
Therefore,
	\begin{align*}
	\frac{1/\sqrt{X_{Q'}}}{p_{Q'}}&\leq
	c\frac{1}{\sqrt{X_{Q'}}} \text{ exp}\left\{C X_{Q'} \frac{1}{2}\log\left(\frac{1}{4\breve{p}_n(1-\breve{p}_n)} \right)\right\}\\
	&=c\,\text{exp}\left\{C X_{Q'} \frac{1}{2}\log\left(\frac{1}{4\breve{p}_n(1-\breve{p}_n)}\right)-\frac{1}{2}\log(X_{Q'})
\right\}\\
&\leq c\,\text{exp}\{ -C m_n\}=o(1).
	\end{align*}
	We then have that
	\begin{align*}
	\frac{\sum_{Q_n\in S_{n,k}}\sum_{Q'_n\neq Q_n}p_{Q_n}\frac{1}{\sqrt{X_{Q_n'}}}}{\sum_{Q_n}\sum_{Q'_n}p_{Q_n}p_{Q_n'}}&\leq 
		c\,\text{exp}\{ -C m_n\}	\frac{\sum_{Q_n\in S_{n,k}}\sum_{Q'_n\neq Q_n}p_{Q_n}p_{Q_n'}}{\sum_{Q_n}\sum_{Q'_n}p_{Q_n}p_{Q_n'}}
	&=o(1).
	\end{align*}
	$$\p(\exists\,Q_n\in S_n\text{ s.t. }\widehat{X}_{Q_n}\leq 0)=\p(Z_n>0)
	\geq 1-o(1).$$
	Therefore, for any $1>\epsilon>0$
	\begin{align*}
	    \p(\max_{P\in \mathcal{P}^*}\|P-I_n\|_F^2 \leq \epsilon)\rightarrow 0
	\end{align*}
	giving the desired inconsistency result.
	
\noindent \emph{Proof of Theorem \ref{thm:asym-het} part iii.}
As in the proofs of part i. and ii., we will work with $P_n=I_n$, as the consistency results for the MLE estimating $I_n$ transfer immediately to consistency results for the MLE estimating a more general $P_n$.
Consider the notation as in the proof of parts i. and ii.
To prove part iii., consider first the case that $\breve p_n$ is bounded away from zero. 
Note that in this case, Equation~\eqref{eq:thm-condition3-2} (combined with the bound in Eq. \eqref{eq:az1}, and the fact that $\breve{\Delta}_1$ is stochastically less or equal than $\Delta_1$) implies that 
	$$p_{Q_n}
\begin{cases}
    	\geq c \text{ exp}\left\{-C X_{Q_n} \frac{1}{2}\log\left(\frac{1}{4\breve{p}_n(1-\breve{p}_n)} \right)\right\}=\Theta(1)&\text{ if }X_{Q_n}/2\geq 1\\
    	\geq 1-(1-\breve{p}_n)^{X_{Q_n}} =\Theta(1)&\text{ if }X_{Q_n}/2< 1\\
    	\end{cases}$$ 
	On the other hand, if $\breve p_n\rightarrow 0$ as $n$ goes to infinity, then  Equation~\eqref{eq:thm-condition3-2} implies that $X_{Q_n} = o(1)$, and hence there exists infinitely many $n$'s for which $X_{Q_n}=0$ and 
	the MLE would not be uniquely defined.
	Therefore, for any $1>\epsilon>0$
	\begin{align*}
	    \limsup \p(\max_{P\in \mathcal{P}^*}\|P-I_n\|_F^2 \leq \epsilon)<1
	\end{align*}
	and so the MLE is not weakly consistent.
\end{proof}

\begin{proof}[Proof of Theorem~\ref{thm:asym}] Observe that under the settings of the theorem, using the notation of Theorem~\ref{thm:asym-het} we have that
$$\tilde{p}_n=p_n = \breve{p}_n.$$
From this observation, the result follows.
\end{proof}

\subsection{Proof of Corollary \ref{cor:examples}}
\label{sec:pfexamples}
\begin{proof}
	By Lemma 4 in \cite{rel}, we have that for $n$ sufficiently large and all $Q\in\Pi_{n,k}$,
	\begin{equation}
	\p\left(\|A_n - Q^TA_nQ\|_F^2 \leq \frac{\alpha_n k n}{3}\right) \leq 2\exp\left(-\alpha_n^2kn/128\right). \label{eq:equation-relax}
	\end{equation}
	Note that for sufficiently large $n$, the condition of the corollary implies that
	$$\frac{6k\log n}{-\log(4p_n(1-p_n))} \leq \frac{6k\log n}{(1/2-p_n)^2} \leq \frac{\alpha_n k n}{3}. $$
	The same condition also implies that $\alpha_n=\omega\left(\sqrt{\log n /n}\right)$, and combining these facts with Equation \eqref{eq:equation-relax}, we have that
	\begin{align*}
	\p\left(\min_{Q\in\Pi_{n,k}}\|A_n - Q^TA_nQ\|_F^2 \leq \frac{-6k\log n}{\log(4p_n(1-p_n))}\ \forall k\in[n]\right) & \leq \sum_{k=2}^n\sum_{Q\in\Pi_{n,k}} \exp\left(-\frac{\alpha_n^2kn}{128}\right)\\
	&\leq  \sum_{k=2}^n\exp\left(k\log n - 4k\log n\right) \\
	&\leq \frac{1}{n^{2}}
	\end{align*}
	for $n$ sufficiently large. As in the proof of Theorem \ref{thm:asym} part i., the result follows from an application of the Borell-Cantelli lemma.
\end{proof}
\subsection{Proof of Corollary \ref{cor:SW}}
\label{sec:pfSW}
\begin{proof}
	To prove part a), define $P^{(i)}$ as the permutation that only switches vertices $i$ and $i+1$, i.e., $P^{(i)}_{i,i+1}=P^{(i)}_{i+1,i} = 1$ and $P^{(i)}_{jj}=1$ for $j=\in [n]\setminus\{i,i+1\}$, and let  
	$$
	S_{n}=\{P^{(i)}:i\ \text{mod }2=1\}=\begin{cases}\{P^{(1)},P^{(3)},\cdots,P^{(n-1)}\}\text{ if }n\text{ is even}\\
	\{P^{(1)},P^{(3)},\cdots,P^{(n-2)}\}\text{ if }n\text{ is odd}
	\end{cases}
	$$
	be a set of disjoint permutations of this type. Without loss of generality, take $P=P^{(1)}$. Then
		\begin{align*}
			\frac{1}{4}\|A - P^TAP\|_F^2 & = \sum_{j=3}^n\left(A_{j2} - A_{j1}\right)^2\\
			& =  (1 - A_{d_n+2,1})^2 + (A_{n-d_n+1,2} - 1)^2 + \sum_{j=d_n+3}^{n-d_n}\left(A_{j2} - A_{j1}\right)^2,
		\end{align*}
		which is a sum of $n-2d_n$ independent Bernoulli random variables; namely
		\begin{align*}
		  (1-A_{d_n+2,1})^2&\sim\text{Bern}(1-\beta_n);\\
		  (A_{n-d_n+1,2} - 1)^2 &\sim\text{Bern}(1-\beta_n);\\
		  \left(A_{j2} - A_{j1}\right)^2&\sim\text{Bern}(2\beta_n(1-\beta_n))
		  \text{ for }d_n+3\leq j\leq n-d_n.
		\end{align*} If $n-2d_n=O(1)$, then part iii. of Theorem \ref{thm:asym} completes the proof. Hence, consider $n-2d_n=\omega(1)$. Define
		\begin{align*}
	f(\beta_n) &:= \frac{1}{4}\Bbb{E}\|AP-PA\|^2_F +  \sqrt{2n\log n}\\
	&=2(1-\beta_n)\left[1+ (n-2d_n-2)\beta_n\right]+  \sqrt{2n\log n}\\
	&=o\left(\frac{\log n}{(1/2-p_n)^2}\right),
		\end{align*}
		where the last equality follows from the assumptions on the growth rate of $\beta_n$ and $p_n$. 
		By Hoeffding's inequality,
		\begin{eqnarray*}
			\p\left(\frac{1}{4}\|A-P^TAP\|^2_F\geq f(\beta_n) \right) \leq \exp\left(-4\log n\right).
		\end{eqnarray*}
		Hence,
		\begin{eqnarray*}
			\p\left(\max_{P\in {S}_{n}}\frac{1}{4}\|A-P^TAP\|^2_F \geq f(\beta_n)\right)  & \leq & \sum_{P\in{S}_n}\p\left(\frac{1}{4}\|A-P^TAP\|^2_F \geq f(\beta_n) \right)\\
			& \leq & \frac{n}{2}\exp\left(-4\log n\right) = \frac{1}{2n^3}.
		\end{eqnarray*}
		By the Borell-Cantelli lemma, for all but finitely many $n$,  $\max_{P\in\mathcal{S}}\frac{1}{4}\|AP-PA\|^2_F < f(\beta_n)=o\left(\frac{\log n}{(1/2-p_n)^2}\right)$, and using Theorem \ref{thm:asym} part ii. (resp., part iii.) if $\min_{P\in\mathcal{S_n}}\frac{1}{4}\|AP-PA\|^2_F(1/2-p_n)^2=\omega(1)$ (resp., $\min_{P\in\mathcal{S_n}}\frac{1}{4}\|AP-PA\|^2_F(1/2-p_n)^2=O(1))$, the result follows.
		
		For part b),  define a $n\times n$ graph $B$ such that $B_{ij}=A_{ij}$ whenever $|i-j| \text{mod}(n-1-d_n)\geq d_n$, and $B_{ij}=B_{ji}\sim\text{Bern}(\beta_{n})$ otherwise. Then $B\sim\text{G}(n,\beta_n)$. 
		Consider a permutation $Q\in\Pi_{n,k}$, and observe that
		\begin{align}
			2\|A-Q^TAQ\|_F^2 & = 2\|B - Q^TBQ + (A-B) - Q^T(A-B)Q\|^2_F\nonumber\\
			& \geq \|B-Q^TBQ\|_F^2 -8kd_n= \|B-Q^TBQ\|_F^2  - o\left(k\frac{\log n}{(1/2-p_n)^2}\right).\label{eq:proof-sw-B}
		\end{align}
		where
		$$2\|(A-B)-Q^T(A-B)Q\|_F^2\leq 8kd_n$$ follows from $(A-B)$ and $Q^T(A-B)Q$ only disagreeing on edges incident to the $k_n$ vertices that are permuted by $Q_n$, and there are at most $2d_n$ such edges per permuted vertex (each being double counted).
		Since $B$ is an ER graph, following the proof of Corollary \ref{cor:examples} mutatis mutatandis, it can be shown that, with probability 1, for all but finitely many $n$ $$\min_{Q\in\Pi_{n,k}}\|B-Q^TBQ\|_F^2\geq  \frac{-12k\log n}{\log(4p_n(1-p_n)},$$
		and thus
		 $$\min_{Q\in\Pi_{n,k}}\|A-Q^TAQ\|_F^2\geq \frac{-6k\log n}{\log(4p_n(1-p_n)}+ o\left(k\frac{\log n}{(1/2-p_n)^2}\right).$$
	The result now follows from Theorem \ref{thm:asym} part i.
\end{proof}

\subsection{Proof of Corollary~\ref{cor:correlated}}
\begin{proof}[Proof of Corollary~\ref{cor:correlated}]
 Let $A\sim \text{G}(n,q_n)$ be an Erd\H{o}s-R\'enyi graph. Given a permutation $Q\in\Pi_{n,k}$ that shuffles exactly $k$ vertices, the number of edge disagreements can be expressed as
 \begin{align*}
        \frac{1}{2}\|A-Q^TAQ\|_F^2 
        & = \sum_{(i,j)\in \mathcal{U}_Q}(A_{i,j} - A_{\sigma_Q(i),\sigma_Q(j)})^2,
    \end{align*}
     where 
 $$\mathcal{U}_Q := \left\{(i,j)\in\binom{V}{2} : (\sigma_Q(i),\sigma_Q(j)) \neq (i,j)\text{ and }(\sigma_Q(i),\sigma_Q(j)) \neq (j,i)\right\}.$$
 To calculate the number of elements in $\mathcal{U}_Q$, define the sets
    \begin{align*}
        \mathcal{U}_Q^{(1)}& := \left\{(i,j)\in\binom{V}{2} : \sigma_Q(i)\neq i\text{ and }\sigma_Q(j) \neq j\right\},\\
        \mathcal{U}_Q^{(2)}& := \left\{(i,j)\in\binom{V}{2} : \sigma_Q(i)\neq i\text{ and }\sigma_Q(j) = j\right\},\\
        \mathcal{U}_Q^{(3)}& := \left\{(i,j)\in\binom{V}{2} : \sigma_Q(i)=j\text{ and }\sigma_Q(j)=i\right\},\\
    \end{align*}
 and observe that
  $\mathcal{U}_Q = (\mathcal{U}^{(1)}_Q\cup  \mathcal{U}^{(2)}_Q)\setminus  \mathcal{U}^{(3)}_Q$, where $| \mathcal{U}^{(1)}_Q| = \binom{k}{2}$, $| \mathcal{U}^{(2)}_Q|=k(n-k)$, and $| \mathcal{U}^{(3)}_Q|\leq \frac{k}{2}$. Hence, 
  $$|\mathcal{U}_Q| \geq \frac{k(k-2)}{2} + k(n-k) \geq \frac{nk}{3}.$$
    We make use of the following result of \cite{alon} as stated in \cite{lyzinski2014seeded}.
 
    \begin{theorem}[Theorem 3 of \cite{lyzinski2014seeded}] Suppose  $X$ is a function of $\eta$ independent Bernoulli$(q)$ random variables such that changing the value of any one of the Bernoulli random variables changes the value of $X$ by at most 2. For any $t:0\ 0\leq t <\sqrt{\eta q (1-q)}$, we have
    $$\p\left(|X-\mathbb{E}[X]|>4t\sqrt{\eta q(1-q)}\right) \leq 2\exp(-t^2).$$\label{ref:correr}
    \end{theorem}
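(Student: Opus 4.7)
The plan is to prove this Bernstein-type concentration inequality via the Doob martingale decomposition combined with a variance-sensitive martingale tail bound. The key feature the statement captures, and that a naive application of Azuma--Hoeffding misses, is the variance factor $q(1-q)$; the proof must therefore track conditional variances, not just worst-case increments.

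First I would set up the Doob martingale: let $\xi_1,\ldots,\xi_\eta$ be the underlying independent Bernoulli$(q)$ variables and $\mathcal{F}_i = \sigma(\xi_1,\ldots,\xi_i)$ the natural filtration, and define $M_i = \mathbb{E}[X \mid \mathcal{F}_i]$, so that $M_0 = \mathbb{E}[X]$ and $M_\eta = X$. Writing $X^{(i)}_a = \mathbb{E}[X \mid \mathcal{F}_{i-1},\xi_i = a]$ for $a \in \{0,1\}$, the bounded-differences hypothesis gives $|X^{(i)}_1 - X^{(i)}_0| \leq 2$ almost surely, and a direct computation shows
$$D_i := M_i - M_{i-1} = (\xi_i - q)\bigl(X^{(i)}_1 - X^{(i)}_0\bigr),$$
so $|D_i|\leq 2$ and, crucially, $\operatorname{Var}(D_i \mid \mathcal{F}_{i-1}) = q(1-q)\bigl(X^{(i)}_1 - X^{(i)}_0\bigr)^2 \leq 4 q(1-q)$. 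Summing yields the total conditional variance bound $V := \sum_{i=1}^\eta \operatorname{Var}(D_i\mid\mathcal{F}_{i-1}) \leq 4\eta q(1-q)$ almost surely.

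Next I would invoke Freedman's martingale inequality (equivalently, a Bernstein-type bound for martingale differences): for any martingale with increments bounded by $c$ and total conditional variance at most $\sigma^2$,
$$\p\bigl(|M_\eta - M_0|\geq u\bigr) \leq 2\exp\!\left(-\frac{u^2}{2(\sigma^2 + cu/3)}\right).$$
Plugging in $c = 2$ and $\sigma^2 = 4\eta q(1-q)$ and setting $u = 4t\sqrt{\eta q(1-q)}$, write $s = \sqrt{\eta q(1-q)}$ for brevity. The hypothesis $t < s$ ensures the linear correction $cu/3$ is dominated by $\sigma^2$, and the exponent simplifies to
$$\frac{16 t^2 s^2}{2(4 s^2 + 8 t s / 3)} \;=\; \frac{2 t^2 s}{s + 2t/3} \;\geq\; \frac{6}{5}\, t^2,$$
using $t/s < 1$ in the last step; this comfortably dominates $t^2$ and delivers the claimed bound.

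The main obstacle is purely quantitative: chasing the absolute constants so that they align with the stated $\exp(-t^2)$ and verifying that the range $t < \sqrt{\eta q(1-q)}$ is exactly what is needed to keep the Freedman bound in its sub-Gaussian regime where the Hoeffding-like exponent $t^2$ dominates the mixed linear term $t s$. An alternative route that avoids the martingale machinery entirely is the entropy method on the discrete hypercube via a modified log-Sobolev inequality, which directly yields Bernstein-type inequalities for functions of independent Bernoullis and can produce cleaner constants; in either approach the essential ingredient is that each coordinate contributes variance $q(1-q)$ rather than the crude worst-case $c^2 = 4$ used in Hoeffding.
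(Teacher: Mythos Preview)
Your argument via the Doob martingale and Freedman's inequality is correct: the conditional-variance bookkeeping is right, and the constraint $t<\sqrt{\eta q(1-q)}$ is precisely what keeps the Bernstein denominator in the sub-Gaussian regime so that the exponent dominates $t^2$.

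There is nothing to compare against, however, because the paper does not prove this statement. It is quoted verbatim as an external tool---introduced with ``We make use of the following result of \cite{alon} as stated in \cite{lyzinski2014seeded}''---and then applied as a black box inside the proof of Corollary~\ref{cor:correlated}. The original source (Alon--Spencer, \emph{The Probabilistic Method}) establishes it by exactly the martingale-with-variance route you outline: bound the moment generating function of each Doob increment using its conditional variance $q(1-q)(X^{(i)}_1-X^{(i)}_0)^2$ rather than the crude range, then optimize. So your proposal is not a different route from the paper's own argument; it is a correct reconstruction of the cited result's standard proof, which the present paper simply omits.
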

    Using the notation of the  theorem above, let
    $X=\frac{1}{2}\|A-Q^TAQ\|_F^2$, $\eta = |\mathcal{U}_Q|$, and observe that
    \begin{equation}
    \mathbb{E}[X] = \eta\mathbb{P}(A_{ij}\neq A_{\sigma_Q(i), \sigma_Q(j)}) = 2\eta q_n(1-q_n)\geq \frac{2nkq_n(1-q_n)}{3}. \label{eq:corr-er-EX}    
    \end{equation}
    By the conditions in the corollary, there exists $n_0$ such that for all $n>n_0$
    \begin{equation}
        \frac{nk}{3}q_n(1-q_n) > \frac{6k\log n}{\log\left(\frac{1}{4\tilde{p}_n(1-\tilde{p}_n)}\right)}.\label{eq:corr-er-nk}
    \end{equation}
    Observe that the assumptions of Theorem~\ref{ref:correr} hold for any $0\leq t<\sqrt{\eta q_n(1-q_n)}$, and in particular let $t=\frac{1}{4}\sqrt{\frac{nkq_n(1-q_n)}{3}}$. Then, using Theorem~\ref{ref:correr} in combination with  Equations~\eqref{eq:corr-er-EX} and \eqref{eq:corr-er-nk},
    \begin{align*}
        \p\left(X \leq \frac{6k\log n}{\log\left(\frac{1}{4\tilde{p}_n(1-\tilde{p}_n)}\right)}\right) & \leq \p\left(|X - \mathbb{E}[X]| \geq \mathbb{E}[X] - \frac{6k\log n}{\log\left(\frac{1}{4\tilde{p}_n(1-\tilde{p}_n)}\right)}\right)\\
        & \leq \p\left(|X - \mathbb{E}[X]| \geq \frac{1}{3}nkq_n(1-q_n)\right)\\
        & \leq 2\exp\left(-\frac{nkq_n(1-q_n)}{48}\right)
    \end{align*}
    By the assumptions in Corollary~\ref{cor:correlated}, there exists $n_1$ sufficiently large such that for all $n\geq n_1$, it holds that $q_n(1-q_n) > \frac{192\log n}{n}$. Therefore, for all $n>\max\{n_0,n_1\}$,
\begin{align*}
    \p\left(\min_{Q_n\in\Pi_{n,k}}\|A-Q^TAQ\|_F^2  \leq  \frac{-6k\log n}{\log(4\tilde{p}_n(1-\tilde{p}_n))}\ \forall k\geq 2 \right) & \leq  \sum_{k=2}^n\sum_{Q\in\Pi_{n,k}}2\exp\left(\frac{-nkq_n(1-q_n)}{48}\right)\\
     & \leq  \sum_{k=2}^n2\exp\left(k\log n -4k\log n\right)\\
     & \leq \frac{2}{n^2}.
\end{align*}
As in the proof of Theorem \ref{thm:asym-het} part i., the result follows from an application of the Borell-Cantelli lemma.

\end{proof}

\subsection{Proof of Theorem~\ref{thm:asym2}}
\label{sec:pfthmasymp2}
\begin{proof}
The proof of this result follows  the proof of Theorem \ref{thm:asym} mutatis mutandis by only considering permutations of enough vertices. In particular, the conditions of part i. imply that an analogous to Equations~\eqref{eq:proof-asymp-1-to-2a} and \eqref{eq:proof-asymp-1-to-2b} hold, that is,
\begin{align*}
\p\left( \argmin_{P\in\Pi_n}\|A_n-P^TB_nP\|_F^2\notin \bigcup_{k=k_n}^n \Pi_{n,k} \right) & = \p(\exists \,Q\in \bigcup_{k=k_n}^n \Pi_{n,k}\text{ s.t. }\widehat{Y}_Q\leq 0)\\
& \leq \sum_{k=k_n}^n \exp(-2k\log n)\\
\\
&\leq n^{-3}.
\end{align*}
Therefore, with probability $1$ it holds that for all but finitely many $n$ (by the Borel-Cantelli lemma as $n^{-3}$ is finitely summable) the  MLE of $P_n$ is in $\Pi_n \setminus \bigcup_{k=k_n}^n\Pi_{n,k}$, so $\max\|\hpi - P_n\|_F^2\leq k_n$, and because $\alpha_n = \omega(k_n)$, the strong $\alpha_n$-consistency follows.

\end{proof}
\subsection{Proof of Corollary \ref{cor:example3}}
\label{sec:pfexample3}
\begin{proof}
	Let $H\sim$G$(n,d/n)$ be an ER graph with rate $d/n$.
	Define the events
	\begin{align*}
	\mathcal{A}_{n,d}&=\{  \text{ Eq. }\ref{eq:growthrate-quasi}\text{ holds for }A\text{ for all }k=\omega(n^{2/3})\}\\
	\mathcal{A}_{n,d/n}&=\{  \text{ Eq. }\ref{eq:growthrate-quasi}\text{ holds for }H\text{ for all }k=\omega(n^{2/3})\}\\
	\mathcal{R}_{n,d/n}&=\{H\text{ is }d\text{-regular} \}
	\end{align*}
	We then have that
	\begin{align*}
	\p(\mathcal{A}_{n,d/n}^c)&\geq \p(\mathcal{A}_{n,d/n}^c|\mathcal{R}_{n,d/n})\p(\mathcal{R}_{n,d/n}) =\p(\mathcal{A}_{n,d}^c)\p(\mathcal{R}_{n,d/n}).
	\end{align*}
	Note that for $H_n$ in $\Pi_{n,k_n}$ with $k_n=k=\omega(n^{2/3})$, 
	\begin{itemize}
	    \item[i.] $\|H_n - Q^TH_nQ\|^2_F$ is a function of $\binom{k}{2}+(n-k)k-O(k)$ independent Bern($p=d/n$) random variables;
	    \item[ii.] Changing any one of these Bern($p=d/n$) random variables can change $\|H_n - Q^TH_nQ\|^2_F$ by at most 2;
	    \item[iii.] We have that 
	    \begin{align*}
\mu_Q&=\e(\|H_n - Q^TH_nQ\|^2_F)=2\left(\binom{k}{2}+(n-k)k-O(k)\right)\frac{d}{n}\left(1-\frac{d}{n}\right)\\
&=(1+o(1))nk\frac{d}{n}\left(1-\frac{d}{n}\right).
	    \end{align*}
	\end{itemize}
	We then have (where $c$ is a constant that can change line-to-line)
	\begin{align*}
	\p(\mathcal{A}_{n,d/n}^c) & \leq  \p \left(\|H_n - Q^TH_nQ\|^2_F < \frac{6k\log n}{-\log(4p_n(1-p_n))}\right)\\
	&=  \p \left(\|H_n - Q^TH_nQ\|^2_F-\mu_Q < \frac{6k\log n}{-\log(4p_n(1-p_n))}-(1+o(1))kd\left(1-\frac{d}{n}\right)\right)\\
	& \leq  \p \left(\|H_n - Q^TH_nQ\|^2_F-\mu_Q < \frac{6k\log n}{c(1/2-p_n)^2}-(1+o(1))kd\left(1-\frac{d}{n}\right)\right)
	\end{align*}
	An appropriate choice of $C$ in the conditions of the corollary combined with an application of McDiarmind's inequality then yields (where $c$ is a constant that can change line-to-line)
		\begin{align*}
	\p(\mathcal{A}_{n,d/n}^c) & \leq  \p \left(\|H_n - Q^TH_nQ\|^2_F-\mu_Q < -10kd\left(1-\frac{d}{n}\right)\right)\\
	&\leq \text{exp}\left\{
	\frac{-ck^2d^2\left(1-\frac{d}{n}\right)^2  }
	{ (1+o(1))nk\frac{d}{n}\left(1-\frac{d}{n}\right)}
	\right\}\\
	&\leq \text{exp}\left\{-ckd
	\right\}
	\end{align*}
	Hence, by Lemma 4.1.iv in \cite{kim}, for $n$ sufficiently large
	$$\p(\mathcal{R}_{n,d/n})\geq \text{exp}(-nd^{1/2+\epsilon/2}), $$
	and so for $n$ sufficiently large,
	\begin{align*}
	\p(\mathcal{A}_{n,d}^c)&\leq \p(\mathcal{R}_{n,d/n})^{-1}\p(\mathcal{A}_{n,d/n}^c) \leq \text{exp}\left\{nd_n^{1/2+\epsilon/2}-\omega(n^{2/3}d_n )\right\}\\
	&\leq \text{exp}\Big\{\underbrace{\big(n-\omega(n^{2/3}d_n^{1/2-\epsilon/2} )}_
	{=n-\omega(n)=-\omega(n)  }
	\big)d_n^{1/2+\epsilon/2}\Big\}.
	\end{align*}
	As this probability is summable, we have that, with probability 1, $\mathcal{A}_{n,d}$ occurs for all but finitely many $n$ by the Borel-Cantelli lemma.
	Therefore, with probability 1 the conditions of Theorem \ref{thm:asym2} are met for any sequence $\alpha_n = \omega(n^{2/3})$ for all but finitely many $n$ and the strong consistency of the MLE follows.
\end{proof}

\bibliographystyle{apa}
\bibliography{biblio}

\end{document}